\documentclass{article}

\PassOptionsToPackage{square,numbers}{natbib}
\PassOptionsToPackage{dvipsnames}{xcolor}

\usepackage[final]{neurips_2026}
\makeatletter
\providecommand{\@trackname}{}
\makeatother

\usepackage[utf8]{inputenc} 
\usepackage[T1]{fontenc}    
\usepackage{hyperref}       
\usepackage{url}            
\usepackage{booktabs}       
\usepackage{amsfonts}       
\usepackage{microtype}      
\usepackage{xcolor}         

\usepackage{tikz}
\usetikzlibrary{positioning, fit, backgrounds, arrows.meta}
\usetikzlibrary{calc} 
\usepackage[dvipsnames]{xcolor} 

\usepackage{enumitem}
\usepackage{amsmath}
\usepackage{amssymb}
\usepackage{amsthm}
\usepackage{mathtools}
\usepackage{graphicx}
\usepackage{algorithm}
\usepackage{algorithmicx}
\usepackage{algpseudocode}
\usepackage[capitalize,noabbrev]{cleveref}

\newtheorem{theorem}{Theorem}[section]
\newtheorem{proposition}[theorem]{Proposition}
\newtheorem{lemma}[theorem]{Lemma}

\newtheorem{definition}[theorem]{Definition}

\theoremstyle{remark}

\newcommand{\PrivateGGM}{\textsc{PACE-GGM}}

\newcommand{\SSP}{\textsc{SSP}}

\DeclareMathOperator*{\argmin}{argmin}
\DeclareMathOperator*{\argmax}{argmax}

\newcommand{\R}{\mathbb R}

\newcommand{\ds}[1]{}
\newcommand{\cf}[1]{}
\newcommand{\brett}[1]{}

\title{Private Adaptive Covariance Estimation via Gaussian Graphical Models}

\author{%
  Cecilia Ferrando\thanks{Contact: cferrando@cs.umass.edu}
  \quad Miguel Fuentes
  \quad Brett Mullins
  \quad Cameron Musco
  \quad Daniel Sheldon \\
  Manning College of Information and Computer Sciences \\
  University of Massachusetts Amherst
}

\begin{document}

\maketitle

\begin{abstract}
We propose \textsc{PACE-GGM}, a data-adaptive differentially private method for covariance estimation that concentrates its privacy budget on the most informative entries of the empirical covariance matrix, rather than perturbing all entries. 
This applies in the natural setting where the modeler supplies separate bounds for each variable, so that individual entries can be measured with less noise than the full matrix.
In each round, our method selects a poorly approximated entry, measures it using the Gaussian mechanism, and then reconstructs a full covariance matrix using a maximum-entropy reconstruction objective, leading to a Gaussian graphical model structure.
Experiments on diverse real-world datasets demonstrate consistent improvements in estimation error with respect to the Gaussian mechanism and other baselines, particularly in high-dimensional and low-to-moderate privacy regimes.
\end{abstract}

\section{Introduction}\label{sec:intro}

Covariance estimation is a fundamental primitive in statistics and machine learning, underpinning tasks such as regression, principal component analysis, and multivariate modeling. In settings involving sensitive data, differential privacy (DP) provides a rigorous framework for releasing covariance estimates while protecting individual-level information.
A canonical approach to private covariance estimation is to use the Gaussian mechanism to add noise to each of the $d(d+1)/2$ distinct entries of the empirical covariance matrix of $d$-dimensional data records, with sensitivity computed from an assumed bound on the $\ell_2$ norm of a data record.
This approach was pioneered in~\citep{dwork2014analyze} and is used as a core routine in many other algorithms~\citep{sheffet2017differentially, wang2018revisiting, alabi2020simple, biswas2020coinpress,  lin2024differentially}.

However, in many applications it is more natural to assume \emph{coordinate-wise} bounds rather than an a priori bound on the $\ell_2$ norm of a full data record, and in this setting 
measuring individual entries can be substantially cheaper, from a privacy perspective, than perturbing the entire matrix.
For example, assuming the $i$th coordinate of each data record has magnitude at most $1$, the sensitivity of an invidual entry of the covariance matrix is a factor of $d/\sqrt{2}$ smaller than that of the full matrix, so it can be measured more precisely than the full matrix for the same privacy budget. 
This observation raises a natural question: can we design an algorithm that selectively measures a subset of entries of the empirical covariance matrix and uses them to reconstruct a full matrix?
Selective measurement methods are widely used in differential privacy for query answering and synthetic data generation and often give state-of-the-art performance at realistic privacy budgets~\cite{mckenna2021hdmm, liu2021iterative, aydore2021differentially, zhang2021privsyn, cai2021data, mckenna2022aim,fuentes2026fast}.
Applying this principle to covariance estimation requires both an adaptive selection procedure and a principled reconstruction method.

We propose \emph{private adaptive covariance estimation via Gaussian graphical models} (\PrivateGGM{}) which iteratively selects the worst approximated entries, privately measures them via the Gaussian mechanism, and reconstructs the full covariance matrix using a new maximum-entropy reconstruction method. This connects to the classical covariance selection problem of \citet{dempster1972covariance}, yielding estimates with sparse precision matrices.
Empirically, \PrivateGGM{} outperforms the Gaussian mechanism and other existing methods, with gains most pronounced in the high-dimensional or low-to-moderate privacy budget regime.

\section{Related Work}
\label{sec:related}

\paragraph{Differentially private covariance estimation}
The canonical approach to private covariance estimation is the Gaussian mechanism, introduced by~\citet{dwork2014analyze}: a symmetric Gaussian noise matrix is added to the empirical covariance, with noise applied to all entries and noise scale calibrated to the data only through global sensitivity. 
The same method is often called ``sufficient statistic perturbation'' (\SSP{}) in private parametric statistical estimation, especially for linear regression~\citep{wang2018revisiting}, and we often use this name for brevity.
While simple and computationally efficient, \SSP{} is fundamentally data-independent: the same noise scale is used to perturb every entry regardless of its magnitude or importance, and the output is not guaranteed to be positive semidefinite (PSD).

\citet{biswas2020coinpress} propose CoinPress, which builds on~\citep{kamath2019privately} and assumes a conservative prior spectral bound on the population covariance
and iteratively clips data and shrinks this bound to reduce sensitivity, but still privatizes the full $d \times d$ matrix each iteration via the Gaussian mechanism.
\citet{wang2021differentially} study the high-dimensional sparse setting and propose DP-Thresholding: Gaussian noise is added to all entries of the empirical covariance, and a deterministic hard-thresholding step is applied in post-processing to exploit the assumed sparsity of the true covariance. The privacy budget is spent on the full matrix, with sparsity used only to reduce estimation error. 
\citet{dong2022differentially} propose SeparateCov and AdaptiveCov, which improve over \SSP{} in the high-dimensional regime by decoupling eigenvalue and eigenvector estimation.
AdaptiveCov further selects the best between two different estimators. Despite the structural decomposition, both algorithms still involve a step that privatizes the full covariance matrix, and the adaptivity operates at the level of estimator selection, not entry measurement.

In contrast, \PrivateGGM{} is the first method to privately and adaptively select which entries to measure based on the data. By concentrating the privacy budget on the entries that are most poorly approximated by the current estimate, and recovering the remaining entries through maximum-entropy reconstruction, \PrivateGGM{} avoids spending budget on entries that cannot be measured informatively.

\paragraph{Gaussian graphical models and covariance selection}
The maximum-entropy reconstruction at the heart of \PrivateGGM{} is directly connected to the classical theory of Gaussian graphical models~\citep{lauritzen1996graphical}. \citet{dempster1972covariance} showed that the maximum likelihood estimator of a Gaussian distribution subject to a fixed sparsity pattern $S$ on the precision matrix $K = \Sigma^{-1}$ is equivalent to the maximum-entropy PSD completion of the observed entries in $S$, with $K_{jk} = 0$ automatically for all $(j,k) \notin S$. 
This is the \emph{covariance selection} problem, and the solution is known to be a Gaussian graphical model with independence graph determined by $S$~\cite{lauritzen1996graphical}.
\PrivateGGM{} extends this classical framework to the private adaptive measurements setting where the algorithm selects $S$ privately and iteratively using the exponential mechanism, and observes entries with Gaussian noise rather than exactly. 

\section{Background}
\label{sec:dp-background}

 \ds{I changed to $X, X'$, for datasets since we use $X$ in the main paper. Also changed to use $\mathcal X$ for the data universe of a \emph{single} record and $\mathcal X^n$ for the space of datasets (previously $\mathcal D$ but not clearly defined). I changed the definition of neighboring datasets to the bounded DP model, which is what we use.}

We work under zero-concentrated differential privacy
(zCDP)~\citep{bun2016concentrated} in the bounded DP model, where $X \in \mathcal X^n$ denotes a dataset of $n$ records from the data universe $\mathcal X$.

\begin{definition}[Neighboring datasets]
Two datasets $X, X' \in \mathcal X^n$ are \emph{neighboring} ($X \sim X'$) if they differ in a single record.
\end{definition}

\begin{definition}[$\rho$-zCDP~\citep{bun2016concentrated}]
A randomized mechanism $\mathcal{M}$ satisfies $\rho$-zCDP if for all
$X \sim X'$ and all $\gamma > 1$,
\[
D_\gamma(\mathcal{M}(X)\|\mathcal{M}(X')) \leq \rho \gamma,
\]
where $D_\gamma(\mathcal{M}(X)\|\mathcal{M}(X'))$ is the $\gamma$-Rényi divergence between the distributions of $\mathcal{M}(X)$ and $\mathcal{M}(X')$.
\end{definition}

\begin{definition}[$\ell_2$ sensitivity]
The $\ell_2$ sensitivity of $f: \mathcal{X}^n \to \mathbb{R}^d$ is
$\Delta(f) = \max_{X \sim X'} \|f(X) - f(X')\|_2$.
\end{definition}

\begin{proposition}[Gaussian mechanism~\citep{bun2016concentrated}]
\label{prop:gaussian}
Let $f: \mathcal{X}^n \to \mathbb{R}^d$ be a function with $\ell_2$ sensitivity
$\Delta(f)$. The mechanism $\mathcal{M}(X) = f(X) + \mathcal{N}(0, \sigma^2 I_d)$
satisfies $\frac{\Delta(f)^2}{2\sigma^2}$-zCDP.
\end{proposition}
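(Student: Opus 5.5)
The plan is to compute the Rényi divergence between the output distributions on two neighboring datasets directly, reduce it to a one-dimensional Gaussian computation, and then bound it using the sensitivity. Fix neighboring datasets $X \sim X'$ and write $\mu = f(X)$ and $\mu' = f(X')$. Then $\mathcal M(X) = \mathcal N(\mu, \sigma^2 I_d)$ and $\mathcal M(X') = \mathcal N(\mu', \sigma^2 I_d)$. By the definition of $\rho$-zCDP, it suffices to show that for every $\gamma > 1$,
\[
D_\gamma\bigl(\mathcal N(\mu,\sigma^2 I_d)\,\|\,\mathcal N(\mu',\sigma^2 I_d)\bigr) = \frac{\gamma \|\mu-\mu'\|_2^2}{2\sigma^2}.
\]
Once this identity is established, the definition of $\ell_2$ sensitivity gives $\|\mu - \mu'\|_2 \le \Delta(f)$, so the divergence is at most $\gamma \Delta(f)^2/(2\sigma^2)$, which is exactly $\rho\gamma$ with $\rho = \Delta(f)^2/(2\sigma^2)$.

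To prove the identity, I will use the definition
\[
D_\gamma(P\|Q) = \frac{1}{\gamma-1}\log \int p(x)^\gamma q(x)^{1-\gamma}\,dx .
\]
Substituting the Gaussian densities $p$ and $q$, the normalizing constants cancel because $\gamma + (1-\gamma) = 1$. The integrand becomes $\exp\bigl(-\tfrac{1}{2\sigma^2}[\gamma\|x-\mu\|^2 + (1-\gamma)\|x-\mu'\|^2]\bigr)$. Completing the square, the exponent equals $\|x - (\gamma\mu + (1-\gamma)\mu')\|^2$ plus the constant
\[
\gamma\|\mu\|^2 + (1-\gamma)\|\mu'\|^2 - \|\gamma\mu+(1-\gamma)\mu'\|^2 = \gamma(1-\gamma)\|\mu-\mu'\|^2 .
\]
The remaining integral is that of a normalized Gaussian density, so it equals $1$. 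The log-integral is therefore $-\gamma(1-\gamma)\|\mu-\mu'\|^2/(2\sigma^2)$, and dividing by $\gamma - 1$ yields the claimed identity.

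The only step requiring care is this completing-the-square computation. In particular, one must check that the quadratic coefficient $\gamma + (1-\gamma) = 1$ stays positive, so that the integral converges for all $\gamma > 1$ even though $1 - \gamma < 0$. To keep the algebra clean, I would first rotate and translate coordinates so that $\mu - \mu'$ lies along the first axis. Since the covariance $\sigma^2 I_d$ is isotropic, the other coordinates then contribute identical factors that cancel, and the problem reduces to the one-dimensional case.
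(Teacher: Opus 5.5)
Your proof is correct: the identity $\gamma\|\mu\|_2^2+(1-\gamma)\|\mu'\|_2^2-\|\gamma\mu+(1-\gamma)\mu'\|_2^2=\gamma(1-\gamma)\|\mu-\mu'\|_2^2$ holds. It gives $D_\gamma = \gamma\|\mu-\mu'\|_2^2/(2\sigma^2)$ for every $\gamma>1$, and the sensitivity bound then yields exactly the required $\rho\gamma$ with $\rho=\Delta(f)^2/(2\sigma^2)$. The paper does not prove this proposition and only cites Bun and Steinke, whose argument is this same closed-form R\'enyi-divergence computation for two Gaussians with common covariance $\sigma^2 I_d$; your final rotation step is harmless but unnecessary, since the $d$-dimensional completion of the square already finishes the proof.
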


\begin{proposition}[Exponential mechanism~\citep{mcsherry2007mechanism,bun2016concentrated}]
\label{prop:exponential}
Let $\text{score}_r(X)$ be a quality score with sensitivity
$\Delta$. The mechanism that selects $r$ with probability proportional to
$\exp\!\bigl(\frac{\epsilon}{2\Delta}\text{score}_r(X)\bigr)$
satisfies $\frac{\epsilon^2}{8}$-zCDP.
\end{proposition}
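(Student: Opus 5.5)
The plan is to derive the zCDP guarantee from pure differential privacy. First I will show that the exponential mechanism is $\epsilon$-DP. Then I will invoke the conversion, due to \citet{bun2016concentrated}, that turns a pure DP guarantee into $\frac{\epsilon^2}{2}$-zCDP. Finally I will sharpen the constant to $\frac{\epsilon^2}{8}$ by using the additional structure of the exponential mechanism.

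\textbf{Step 1: pure DP.} Fix neighboring datasets $X \sim X'$ and write $s_r = \text{score}_r(X)$ and $s'_r = \text{score}_r(X')$, so that $|s_r - s'_r| \le \Delta$ for every $r$. Let $P(r) \propto \exp(\frac{\epsilon}{2\Delta}s_r)$ and let $Q$ be the analogous distribution for $X'$. Then
\[
\frac{P(r)}{Q(r)} = \exp\!\Bigl(\tfrac{\epsilon}{2\Delta}(s_r - s'_r)\Bigr)\cdot \frac{Z'}{Z}.
\]
The first factor is at most $e^{\epsilon/2}$. The ratio of normalizers $Z'/Z$ is also at most $e^{\epsilon/2}$, since each summand changes by at most that factor. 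Hence $|\log P(r)/Q(r)| \le \epsilon$, which gives $\epsilon$-DP.

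\textbf{Step 2: the constant $\frac{\epsilon^2}{8}$.} The direct conversion from $\epsilon$-DP gives only $\frac{\epsilon^2}{2}$-zCDP, so a sharper argument is needed. The extra structure I will use is that the privacy loss has the form $L(r) = \frac{\epsilon}{2\Delta}(s_r - s'_r) + c$, where $c = \log(Z'/Z)$ is a constant independent of $r$. Its range in $r$ is therefore contained in an interval of width at most $\epsilon$, namely $[c - \epsilon/2,\, c + \epsilon/2]$. The classical bound on this quantity is due to \citet{bun2016concentrated}; the sharpening to the exponential mechanism's width-$\epsilon$ interval is the "bounded range" refinement, and I would cite it rather than reprove it. To check the bound I will control the Rényi divergence through the moment generating function:
\[
D_\gamma(P\|Q) = \frac{1}{\gamma-1}\log \mathbb E_{r\sim P}\bigl[e^{(\gamma-1)L(r)}\bigr].
\]
Hoeffding's lemma applies because $L$ takes values in an interval of width $\epsilon$, giving
\[
\log \mathbb E_P\bigl[e^{tL}\bigr] \le t\,\mathbb E_P[L] + \frac{t^2\epsilon^2}{8}.
\]
It remains to bound the mean $\mathbb E_P[L] = \mathrm{KL}(P\|Q)$. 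For bounded-range losses this is at most $\epsilon^2/8$, and this mean bound is the main obstacle. I would prove it by the convexity argument: among distributions whose log-likelihood ratio lies in an interval of width $\epsilon$, the KL divergence is maximized by two-point distributions, and a direct computation for these gives $\le \epsilon^2/8$.

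\textbf{Step 3: conclusion.} Substituting $t = \gamma - 1$ and the mean bound gives
\[
D_\gamma \le \frac{\epsilon^2}{8} + \frac{(\gamma-1)\epsilon^2}{8} = \frac{\gamma\epsilon^2}{8},
\]
which is exactly $\frac{\epsilon^2}{8}$-zCDP.
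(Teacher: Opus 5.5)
The paper gives no proof of this proposition: it is stated as background and taken from the cited literature, so there is no in-paper argument to compare against. Your argument is the standard bounded-range proof (Cesar and Rogers), and it is correct.

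Its steps are as follows. The privacy loss $L(r)=\frac{\epsilon}{2\Delta}(s_r-s'_r)+\log(Z'/Z)$ takes values in an interval of width $\epsilon$. You write $D_\gamma(P\|Q)=\frac{1}{\gamma-1}\log\mathbb{E}_P[e^{(\gamma-1)L}]$. Hoeffding's lemma then bounds the fluctuation term by $\frac{(\gamma-1)\epsilon^2}{8}$, and it remains to show $\mathrm{KL}(P\|Q)\le\frac{\epsilon^2}{8}$. This is exactly where the factor of four over the generic $\epsilon$-DP $\Rightarrow$ $\frac{\epsilon^2}{2}$-zCDP conversion comes from. Step 1 is in fact unnecessary, since only the width of the range of $L$ is used. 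Citing, as the paper does, is shorter; your route explains where the constant comes from.

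The step you leave open is the one you call the main obstacle, and it is easier than your plan suggests. The two-point extremal argument is valid. However, the computation yields $\sup\mathrm{KL}=k-1-\log k$ with $k=\frac{\epsilon}{1-e^{-\epsilon}}$, and you would still need to prove $k-1-\log k\le\frac{\epsilon^2}{8}$ for all $\epsilon>0$. That is true, but not a one-liner.

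You can avoid it. $P$ and $Q$ have full support, so $\mathbb{E}_P[e^{-L}]=\sum_r Q(r)=1$. Hoeffding's lemma at $t=-1$ then gives $0=\log\mathbb{E}_P[e^{-L}]\le-\mathbb{E}_P[L]+\frac{\epsilon^2}{8}$, i.e.\ $\mathrm{KL}(P\|Q)\le\frac{\epsilon^2}{8}$. The whole proof is then the same inequality applied at $t=\gamma-1$ and at $t=-1$.
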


\begin{proposition}[Composition and post-processing~\citep{bun2016concentrated,whitehouse2023fully}]
\label{prop:composition}
If $\mathcal{M}_1$ satisfies $\rho_1$-zCDP and $\mathcal{M}_2$ satisfies
$\rho_2$-zCDP, their (possibly adaptive) composition satisfies
$(\rho_1 + \rho_2)$-zCDP. Any post-processing of a $\rho$-zCDP mechanism
satisfies $\rho$-zCDP.
\end{proposition}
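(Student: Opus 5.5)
The plan is to reduce composition to the chain rule for Rényi divergence, and then to observe that post-processing is the data-processing inequality for Rényi divergence.

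\emph{Adaptive composition.} Model the composed mechanism on input $X$ as outputting a pair $(Y_1, Y_2)$. Here $Y_1 \sim \mathcal{M}_1(X)$, and $Y_2 \sim \mathcal{M}_2(X; Y_1)$, where for every fixed $y_1$ the map $X \mapsto \mathcal{M}_2(X; y_1)$ is $\rho_2$-zCDP.

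Fix neighbors $X \sim X'$ and an order $\gamma > 1$. Write $P, P'$ for the joint output laws on $X$ and $X'$, and $p_1, p_1'$ and $p_2(\cdot\mid y_1), p_2'(\cdot\mid y_1)$ for the corresponding densities. Factoring the joint density, I will write
\[
e^{(\gamma-1)D_\gamma(P\|P')} = \int p_1(y_1)^\gamma p_1'(y_1)^{1-\gamma}\Bigl(\int p_2(y_2\mid y_1)^\gamma p_2'(y_2\mid y_1)^{1-\gamma}\,dy_2\Bigr)dy_1 .
\]
The inner integral equals $e^{(\gamma-1)D_\gamma(\mathcal{M}_2(X;y_1)\|\mathcal{M}_2(X';y_1))}$. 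By the hypothesis on $\mathcal{M}_2$, this is at most $e^{(\gamma-1)\rho_2\gamma}$, uniformly in $y_1$.

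Pulling this constant out leaves the integral $e^{(\gamma-1)D_\gamma(\mathcal{M}_1(X)\|\mathcal{M}_1(X'))}$, which is at most $e^{(\gamma-1)\rho_1\gamma}$. Taking logarithms and dividing by $\gamma-1>0$ gives $D_\gamma(P\|P') \le (\rho_1+\rho_2)\gamma$.

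\emph{Main obstacle: measure-theoretic care.} The uniform bound must hold for every $y_1$, not just almost every one. More importantly, in the fully adaptive setting the privacy parameter $\rho_2$ may itself be chosen based on $Y_1$. That is the setting of \citep{whitehouse2023fully}, and there one cannot simply pull out a constant. For the statement as given, with $\rho_2$ fixed, the pointwise bound suffices, and I would work with densities relative to a common dominating measure to make the factorization rigorous. For the truly fully adaptive case I would defer to the filter/martingale argument of \citep{whitehouse2023fully}.

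\emph{Post-processing.} Let $g$ be any (possibly randomized) map applied to the output, i.e.\ a Markov kernel. The data-processing inequality for Rényi divergence gives $D_\gamma(g(\mathcal{M}(X))\|g(\mathcal{M}(X'))) \le D_\gamma(\mathcal{M}(X)\|\mathcal{M}(X')) \le \rho\gamma$.

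To justify data processing, I will note that $x \mapsto x^\gamma$ is convex for $\gamma>1$. Applying Jensen's inequality to the likelihood ratio, conditioned on the output of $g$, shows that $\int p^\gamma q^{1-\gamma}$ can only decrease under a kernel. Together the two parts give the proposition.
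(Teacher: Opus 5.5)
Your proof is correct and is the standard argument behind the result the paper cites from \citep{bun2016concentrated}: adaptive composition by factoring the joint density and bounding the inner Rényi integral uniformly in $y_1$, and post-processing by the data-processing inequality for Rényi divergence, which your conditional-Jensen sketch correctly establishes. The paper gives no proof of its own and imports the proposition from \citep{bun2016concentrated,whitehouse2023fully}; deferring the case of data-dependent budgets to \citep{whitehouse2023fully} matches how the paper uses the result in Theorem~\ref{thm:privacyproof}.
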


\begin{lemma}[zCDP to approximate DP~\citep{bun2016concentrated}]
\label{lem:conversion}
If $\mathcal{M}$ satisfies $\rho$-zCDP, then for any $\delta > 0$ it
satisfies $\bigl(\rho + 2\sqrt{\rho\ln(1/\delta)},\,\delta\bigr)$-DP.
\end{lemma}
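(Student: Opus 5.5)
The plan is to pass through Rényi divergence and the privacy-loss random variable. Fix neighboring $X \sim X'$ and write $P = \mathcal{M}(X)$ and $Q = \mathcal{M}(X')$. Define the privacy loss $L(y) = \ln\frac{P(y)}{Q(y)}$ for $y \sim P$. The goal is to show that for any measurable set $E$,
\[
P(E) \le e^{\varepsilon} Q(E) + \delta, \qquad \varepsilon = \rho + 2\sqrt{\rho\ln(1/\delta)} .
\]
A standard argument reduces this to a tail bound: it suffices that $\Pr_{y\sim P}[L(y) > \varepsilon] \le \delta$. To see this, split $E$ into the part where $L \le \varepsilon$, which contributes at most $e^{\varepsilon}Q(E)$, and the part where $L > \varepsilon$, which contributes at most $\delta$.

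To get the tail bound, I will relate the moment generating function of $L$ to Rényi divergence. For every $\gamma > 1$,
\[
\mathbb{E}_{P}\bigl[e^{(\gamma-1)L}\bigr] = e^{(\gamma-1)D_\gamma(P\|Q)} \le e^{(\gamma-1)\gamma\rho},
\]
where the inequality is the $\rho$-zCDP hypothesis. Applying Markov's inequality to $e^{(\gamma-1)L}$ then gives
\[
\Pr[L > \varepsilon] \le \exp\bigl((\gamma-1)(\gamma\rho - \varepsilon)\bigr).
\]

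The remaining step, and the only delicate one, is choosing $\gamma$. I will set $\gamma - 1 = \sqrt{\ln(1/\delta)/\rho}$, which is positive, so the choice is admissible. With $\varepsilon = \rho + 2\sqrt{\rho\ln(1/\delta)}$ we have $\gamma\rho - \varepsilon = (\gamma-1)\rho - 2\sqrt{\rho\ln(1/\delta)}$. Substituting, the exponent becomes
\[
(\gamma-1)^2\rho - 2(\gamma-1)\sqrt{\rho\ln(1/\delta)} = \ln(1/\delta) - 2\ln(1/\delta) = -\ln(1/\delta).
\]
Hence $\Pr[L > \varepsilon] \le \delta$, which yields $(\varepsilon,\delta)$-DP.

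A few measure-theoretic details need care, and I expect them to be the main technical nuisance. First, if $P$ is not absolutely continuous with respect to $Q$, then $D_\gamma$ is infinite, which contradicts the hypothesis; so densities exist and $L$ is well defined. Second, when $\delta \ge 1$ the claim is trivial, so we may assume $\ln(1/\delta) > 0$, which guarantees $\gamma > 1$. The conclusion does not need the tightest constant, since this choice of $\gamma$ produces exactly the stated $\varepsilon$.
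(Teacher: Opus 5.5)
Your proof is correct and follows the standard argument of \citet{bun2016concentrated}, which the paper cites without reproducing: reduce $(\varepsilon,\delta)$-DP to the tail bound $\Pr[L > \varepsilon] \le \delta$, apply Markov's inequality to $e^{(\gamma-1)L}$ using the Rényi bound, and take $\gamma - 1 = \sqrt{\ln(1/\delta)/\rho}$ (equivalently $\gamma = (\varepsilon+\rho)/(2\rho)$). The one case you leave out is $\rho = 0$, where your $\gamma$ is undefined; it is trivial, since then $D_\gamma(P\|Q)=0$ forces $P=Q$ and the claim holds immediately.
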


\section{Method}

\subsection{Problem setup and motivation}


\paragraph*{Setup}
Assume we have a dataset $X \in \R^{n \times d}$ with rows 
$x^{(1)}, \ldots, x^{(n)}$ corresponding to individual records and a user-supplied coordinate-wise bound $|x_j| \leq B$ for each possible data record $x$ and coordinate $j$. 
The user must guarantee these bounds either using domain knowledge or by clipping the data or discarding records that violate them. 
If different bounds are supplied for each coordinate, the columns of $X$ can be rescaled to use a common bound $B$.
We focus on estimating the empirical second-moment matrix $\Sigma(X) = \frac{1}{n} X^\top X$ under $\rho$-zCDP. 
Throughout this work we assume the data is centered (or equivalently that the population mean is zero), so that $\Sigma(X)$ coincides with the empirical covariance matrix. 
More generally, one could privately estimate the mean and use the plug-in covariance estimator
$
\frac{1}{n} X^\top X - \mu\mu^\top,
$
but we omit this step for simplicity. 
While we focus on estimating the empirical covariance, such a routine is also useful for estimating the population covariance under a statistical model in the many cases where $\Sigma(X)$ is a natural estimator of the population covariance.

\paragraph*{Motivation for selective measurement}
\label{sec:sensitivity}
Our coordinate-wise bounding assumption is equivalent to the $\ell_\infty$ bound $\|x\|_\infty \leq B$.
This is natural whenever features are independently bounded, e.g., tabular datasets with clipped or normalized attributes, measurements with known reference ranges, or any setting where each feature is pre-processed to lie within a fixed interval (which after rescaling maps to $[-B, B]$).
In this setting, the sensitivity of a single entry of $\Sigma(X)$ is $2B^2/n$, while the sensitivity of the full matrix is $\sqrt{2}dB^2/n$ (see Appendix~\ref{app:sensitivity}), so a single entry can be measured with noise standard deviation a factor of $d/\sqrt{2}$ smaller than that required to measure the full matrix for the same privacy budget.
This motivates the idea of selectively measuring a subset of entries that are most important. 

Most prior covariance estimation methods assume an $\ell_2$ bound $\|x\|_2 \leq C$ on the data. 
In this case, the sensitivity of a single entry is about the same as the full matrix $(C^2/n$ vs. $\sqrt{2}C^2/n$), so it does not appear helpful to measure a subset of entries. 
To understand the apparent discrepancy, suppose the modeler knows only $\|x\|_\infty \leq B$ and needs to derive an $\ell_2$ bound. 
Since each entry of $x$ could have magnitude $B$, the tightest bound is $\|x\|_2 \leq C = \sqrt{d}B$: this yields the same sensitivity of $\sqrt{2}dB^2/n$ for the full matrix but a now loose bound of $dB^2/n$ for a single entry.

To summarize, if the user starts by knowing only coordinate-wise bounds, which is a very natural setting, then measuring individual entries can save privacy budget relative to measuring the full matrix. 
However, this possibility may be obscured if the user uses the $\ell_\infty$ bound to convert to an $\ell_2$ bound. 
On the other hand, if the user truly knows a tight $\ell_2$ bound, e.g., $\|x\|_2 \leq C$ where $C$ is not much bigger than the $\ell_\infty$ bound $\|x\|_\infty \leq B$---informally, this requires prior knowledge that not too many entries of $x$ can be large simultaneously---then measuring individual entries is not likely to be helpful compared to measuring the full matrix. 



\subsection{The \PrivateGGM{} algorithm}
Our proposed method for \emph{private adaptive covariance estimation via Gaussian graphical models} (\PrivateGGM{}) is shown in Algorithm~\ref{alg:main-algorithm}. 
\PrivateGGM{} is based on the \emph{select-measure-reconstruct} paradigm used throughout the adaptive query release and synthetic data generation literature~\citep{mckenna2021hdmm, liu2021iterative, aydore2021differentially, zhang2021privsyn, cai2021data, mckenna2022aim,fuentes2026fast}, and in particular on AIM (\emph{adaptive iterative mechanism}), a state-of-the-art method for synthetic data and marginal query answering~\cite{mckenna2022aim}.
\PrivateGGM{} initially uses a portion of the privacy budget to measure all diagonal elements of $\Sigma$ and form an initial estimate $\hat \Sigma^{(0)}$ (Algorithm~\ref{alg:initialization}).
Throughout the algorithm, it maintains a set of noisy measurements $\{y_{jk}\}$ of matrix entries together with the values $\{\lambda_{jk}\}$ representing the precision (inverse variance) of each $y_{jk}$.
It then iteratively follows the select-measure-reconstruct framework. 
In each iteration, it first \textbf{selects} an entry $\Sigma_{jk}$ that is poorly approximated by the current estimate using the exponential mechanism.
It then \textbf{measures} the selected entry with the Gaussian mechanism and updates the precision-weighted average measurement $y_{jk}$ and precision $\lambda_{jk}$ for that entry. 
Finally, it \textbf{reconstructs} a new covariance estimate $\hat\Sigma^{(t)}$ by solving an optimization problem to find the maximum entropy PSD matrix among all minimizers of a reconstruction error that is equivalent to the negative log-likelihood of the observed measurements. 

The sensitive data $X$ is only accessed in the select and measure steps.
These steps use the exponential and Gaussian mechanisms together with a zCDP composition analysis similar to the one in AIM~\citep{mckenna2022aim} to prove that the whole procedure satisfies $\rho$-zCDP. 
We return to the details in Section~\ref{sec:privacy-analysis} after describing our maximum entropy reconstruction method, which is a key innovation to enable adaptive covariance estimation. 

\setlength{\textfloatsep}{6pt}
\setlength{\floatsep}{6pt}
\setlength{\intextsep}{6pt}
\vspace{2mm}

\begin{algorithm}[h]
  \caption{
    \PrivateGGM{}
  }
  \label{alg:main-algorithm}
  \begin{algorithmic}[1]
  \Require Data $X \in \R^{n \times d}$, max rounds $T$, budget $\rho$, coordinate-wise bound $B$, parameters $\alpha, \beta$ 
  \Ensure Private covariance estimate $\hat{\Sigma}$

  \State Per-entry sensitivity $\Delta \gets 2B^2/n$
  \State Initialize measurements $\{y_{jk}\}$, precisions $\{\lambda_{jk}\}$, estimate $\hat \Sigma^{(0)}$ using Algorithm~\ref{alg:initialization} with budget $\rho_{\text{diag}} = \alpha \rho$.
  \State $\rho_{\text{used}} \gets \rho_{\text{diag}}$ \quad
  $\rho_{\text{per-round}} \gets (\rho - \rho_{\text{used}})/T$ \quad 
  $\rho_{\text{sel}} \gets \beta \rho_{\text{per-round}}$ \quad 
  $\rho_{\text{meas}} \gets (1-\beta)\rho_{\text{per-round}}$
  \State $t \gets 0$
  \While{$\rho_{\text{used}} < \rho$}
     \State $t \gets t + 1$
     \State \textbf{Select} entry $(j, k)$ via exponential mechanism with $\rho_{\text{sel}}$ and worst-approx. score function:
     \[
      \Pr(\text{select\ } (j, k)) \propto \exp\left(\frac{\epsilon}{2\Delta} \cdot \text{err}(j, k)\right), \quad
     \epsilon = \sqrt{8\rho_{\text{sel}}}, \quad
     \text{err}(j, k) = |\Sigma_{jk}(X) - \hat{\Sigma}^{(t-1)}_{jk}|
     \]
    \State \textbf{Measure} $\Sigma_{jk}$ with budget $\rho_{\text{meas}}$ and update precision-weighted average measurement:
    \[
      z_{jk} \gets \Sigma_{jk}(X) + \mathcal{N}(0, \sigma^2_t), \quad \sigma^2_t = \Delta^2 / (2\rho_{\text{meas}})
    \]
    \[
      y_{jk} \gets (\lambda_{jk}\, y_{jk} + \sigma^{-2}_t\, z_{jk}) / (\lambda_{jk} + \sigma^{-2}_t),
      \qquad
      \lambda_{jk} \gets \lambda_{jk} + \sigma^{-2}_t
    \]
    \State \textbf{Reconstruct} covariance matrix via maximum-entropy reconstruction:
    \[
      \hat{\Sigma}^{(t)} \gets \argmax\{\log|\Sigma| : \Sigma \in \arg\min_{W \succeq 0} \sum_{j \ge k} \lambda_{jk}\,(W_{jk} - y_{jk})^2\}
    \]
    \State $\rho_{\text{used}} \gets \rho_{\text{used}} + \rho_{\text{sel}} + \rho_{\text{meas}}$
    \State Anneal budgets using Algorithm~\ref{alg:budget-annealing}.
  \EndWhile
  \State \Return $\hat{\Sigma}^{(t)}$
  \end{algorithmic}
\end{algorithm}

\begin{figure}[h]
  \centering
  \noindent
  \begin{minipage}{0.40\textwidth}
      \begin{algorithm}[H]
        \caption{Initialization}\label{alg:initialization}
        \begin{algorithmic}[1]
          \State $\lambda_{jk} \gets 0$, $y_{jk} \gets 0$ for all $j \geq k$
          \State $\sigma^2 = \Delta^2 / (2\rho_{\text{diag}}/d)$
          \For{$j = 1$ to $d$}
            \State $y_{jj} \gets \Sigma_{jj}(X) + \mathcal{N}(0, \sigma^2)$
            \State $\lambda_{jj} \gets \sigma^{-2}$
          \EndFor
          \State $\hat{\Sigma}^{(0)} \gets \operatorname{diag}(y_{11} \vee 0, \ldots, y_{dd} \vee 0)$
          \State \Return $\{y_{jk}\}$, $\{\lambda_{jk}\}$, $\hat{\Sigma}^{(0)}$
        \end{algorithmic} 
      \end{algorithm}
  \end{minipage}
  \hfill
  \begin{minipage}{0.56\textwidth}
      \begin{algorithm}[H]
        \caption{Budget annealing}\label{alg:budget-annealing}
        \begin{algorithmic}[1]
          \If{$|\hat{\Sigma}_{jk}^{(t)} - \hat{\Sigma}^{(t-1)}_{jk}| \le \sqrt{2/\pi}\,\sigma_t$} 
          \State $\rho_{\text{sel}} \gets 2\,\rho_{\text{sel}}$
          \quad $\rho_{\text{meas}} \gets 4\,\rho_{\text{meas}}$
          \EndIf
          \If{$(\rho - \rho_{\text{used}}) < 2(\rho_{\text{sel}} + \rho_{\text{meas}})$}
          \State $\rho_{\text{sel}} \gets \beta(\rho - \rho_{\text{used}})$
          \quad $\rho_{\text{meas}} \gets (1-\beta)(\rho - \rho_{\text{used}})$
          \EndIf
        \end{algorithmic}
      \end{algorithm}
  \end{minipage}
\end{figure}

\subsection{Maximum-entropy covariance reconstruction}

\textbf{General problem}
Suppose we are given noisy observations of some entries of a PSD matrix $\Sigma$, including all of the diagonal entries:
\begin{equation}
  \label{eq:measurement-model}
  y_{jk} = \Sigma_{jk} + \mathcal{N}(0, \tau_{jk}^2), \quad (j,k) \in S
\end{equation}
where $S$ is a set of ordered pairs $(j,k)$ with $j \ge k$ and $(j,j) \in S$ for all $j$. 
This setup assumes that the diagonal entries are always observed, and additionally some entries in the lower triangle are observed; the upper triangle is determined by symmetry.
We further assume that each entry is measured at most once; we show below that this is without loss of generality. 

Our reconstruction objective is to solve the following bilevel optimization problem:
\begin{align}\label{eq:objective}
\max\log|\Sigma| \quad \text{s.t.} \quad
\Sigma \in \arg\min_{W \succeq 0} \mathcal L(W), \quad
\mathcal L(W) = \sum_{(j,k)\in S} 
\frac{1}{2 \tau_{jk}^2} (W_{jk} - y_{jk})^2.
\end{align}
The inner objective $\mathcal L(W)$ is the negative log-likelihood of the observations $\{y_{jk}\}_{(j,k) \in S}$ under covariance matrix $W$ plus a constant.
Thus, the inner optimization seeks the PSD matrix that maximizes the log-likelihood of the measurements. 
The outer objective selects the minimizer of $\mathcal L(W)$ with the largest log-determinant.
This is equivalent to finding the maximum-entropy distribution whose covariance matrix minimizes $\mathcal L(W)$: for any $W \succeq 0$, the distribution $\mathcal{N}(0, W)$ has maximum differential entropy over all distributions on $\mathbb R^d$ with the same covariance, and its entropy is $\frac{1}{2}\log|W|$ plus a constant~\citep{cover1991elements}.
Entropy is a measure of smoothness of a distribution and is often used to select the simplest explanation matching a set of observations~\cite{dempster1972covariance}.

Our problem is closely connected to Dempster's \emph{covariance selection} framework~\citep{dempster1972covariance} and Gaussian graphical models (GGMs)~\citep{lauritzen1996graphical}.
If $\tau_{jk} \to 0$ for all $(j, k) \in S$ the inner objective requires $W_{jk} \to y_{jk}$ and the problem becomes the covariance selection problem, which is to find the maximum entropy completion of a partially specified covariance matrix.
Our problem is a generalization where observations are noisy, so the inner optimization seeks to maximize likelihood rather than constrain $W$ to exactly match observations.  
Dempster showed that the covariance selection problem has a unique solution with a sparse inverse. 
Our problem shares this structure:
\begin{proposition}
  \label{prop:sparse-inverse}
  Suppose $\arg\min_{W \succeq 0} \mathcal L(W)$ contains a positive definite matrix.
  Then Problem~\ref{eq:objective} has a unique solution $\Sigma^*$ and $K^* = (\Sigma^*)^{-1}$ satisfies $K^*_{jk} = 0$ for $(j, k), (k, j) \notin S$.
\end{proposition}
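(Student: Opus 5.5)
The plan is to reduce Problem~\ref{eq:objective} to a classical covariance selection problem with exactly specified entries, and then apply a Lagrangian/KKT argument to the log-determinant.

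\textbf{Step 1: the inner minimizers share their $S$-entries.} Since $\mathcal L$ depends on $W$ only through the coordinates $(W_{jk})_{(j,k)\in S}$ and is a strictly convex function of them (every $\tau_{jk}>0$), it is strictly convex on the image of the PSD cone under the projection $P_S: W\mapsto (W_{jk})_{(j,k)\in S}$. That image is convex, so $\mathcal L$ has a unique minimizer $v^*$ there. We also need existence: $\mathcal L$ is coercive in the $S$-coordinates and the PSD cone is closed, and existence is in any case implicit in the hypothesis. It follows that
\[
\arg\min_{W\succeq 0}\mathcal L(W)=\{W\succeq 0: W_{jk}=v^*_{jk},\ (j,k)\in S\}=:\mathcal F ,
\]
a convex, closed affine slice of the PSD cone. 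Because every diagonal entry lies in $S$, each $W\in\mathcal F$ satisfies $|W_{jk}|\le \sqrt{W_{jj}W_{kk}}=\sqrt{v^*_{jj}v^*_{kk}}$, so $\mathcal F$ is bounded and hence compact.

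\textbf{Step 2: existence and uniqueness of $\Sigma^*$.} The outer problem maximizes $\log|\Sigma|$ over $\mathcal F$. By hypothesis $\mathcal F$ contains a positive definite matrix, so the supremum is finite and not $-\infty$. The function $\log\det$ is upper semicontinuous, since it equals $-\infty$ on singular matrices, and $\mathcal F$ is compact, so a maximizer exists. Any maximizer has finite value, so it is positive definite. Uniqueness follows because $\log\det$ is strictly concave on the positive definite cone: the midpoint of two distinct maximizers lies in $\mathcal F$ and would have strictly larger value, a contradiction.

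\textbf{Step 3: sparsity of the inverse.} Let $\Sigma^*\succ 0$ be the maximizer and fix a symmetric off-pattern direction $E=e_je_k^\top+e_ke_j^\top$ with $(j,k),(k,j)\notin S$. Since $\Sigma^*$ lies in the interior of the PD cone, $\Sigma^*+tE\in\mathcal F$ for all small $|t|$, because the $S$-entries are unchanged. Optimality therefore forces the directional derivative to vanish:
\[
0=\frac{d}{dt}\log|\Sigma^*+tE|\Big|_{t=0}=\operatorname{tr}(K^*E)=2K^*_{jk}.
\]
This gives $K^*_{jk}=0$.

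I expect the main obstacle to be Step 1. The point is that the inner argmin is genuinely the whole affine-PSD slice $\mathcal F$, and this rests on strict convexity in the measured coordinates only. Everything afterwards is Dempster's classical argument, made clean by the interiority of $\Sigma^*$.
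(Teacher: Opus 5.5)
Your proposal is correct. Step 1 is exactly the paper's reduction. The paper writes $\mathcal L(W) = f(\pi_S(W))$ and uses strict convexity of $f$ on the convex set $Z = \{\pi_S(W) : W \succeq 0\}$. It concludes that the inner minimizers are precisely the PSD matrices whose $S$-entries equal the unique minimizer $a$.

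The two proofs differ in what follows. The paper stops there and invokes Dempster's existence-and-uniqueness theorem for covariance selection as a black box. Your Steps 2 and 3 instead prove that theorem directly:
\begin{itemize}
\item compactness of the slice $\mathcal F$, which follows from the fixed diagonal;
\item upper semicontinuity of $\log\det$, giving existence of a maximizer;
\item strict concavity of $\log\det$, giving uniqueness;
\item vanishing of the directional derivative along off-pattern directions $E$, giving $K^*_{jk}=0$.
\end{itemize}

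Your route is self-contained. It also makes explicit where the assumption $(j,j)\in S$ for all $j$ is needed: without it, $\mathcal F$ can be unbounded and $\log\det$ unbounded above, so no maximizer need exist. The paper's version is shorter, but it leaves both this point and the sparsity conclusion to the citation.
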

\begin{proof}[Proof sketch]
It can be shown that there are unique values $(a_{jk})_{(j,k) \in S}$ such that $W$ minimizes $\mathcal L(W)$ if and only if $W_{jk} = a_{jk}$ for all $(j, k) \in S$. The result then follows from \citep{dempster1972covariance}.
\end{proof}
The full proof is given in Appendix~\ref{app:properties}.
If $\arg\min_{W \succeq 0} \mathcal L(W)$ does not contain a positive definite matrix, the problem is ill posed and the optimal value is $-\infty$; we discuss further below and in \ref{app:ill-posedness}.

Proposition~\ref{prop:sparse-inverse} is a useful structural property and says that the solution is the covariance matrix of a Gaussian graphical model, which satisfies certain conditional independence properties~\citep{lauritzen1996graphical}. 
For example, if $X \sim \mathcal{N}(0, \Sigma^*)$, then $X_j$ and $X_k$ are conditionally indpendent given the other variables whenever $K^*_{jk} = 0$, i.e., whenever the entries $(j, k)$ and $(k, j)$ are not observed. 
In practice, our optimizer will solve Problem~\ref{eq:objective} to within a numerical tolerance such that the inverse is not exactly sparse, though it is likely possible to design optimizers that enforce sparsity exactly.

\paragraph{Application to \PrivateGGM{}}

It is easy to see that whenever \PrivateGGM{} measures each entry at most once, the measurements $\{y_{jk}\}$ fit the model in Equation~\eqref{eq:measurement-model} with $\tau_{jk}^2 = \lambda_{jk}^{-1}$ and $S = \{(j, k): \lambda_{jk} > 0\}$, giving the objective in the algorithm:
\[
\mathcal L(W) = \sum_{(j,k)\in S} \lambda_{jk}\,(W_{jk} - y_{jk})^2, \quad S = \{(j, k) : \lambda_{jk} > 0\}.
\]
Now we will show that this objective is also correct when $y_{jk}$ is the inverse-variance weighted average of multiple measurements and $\lambda_{jk}$ is the combined precision.

\begin{proposition}
  \label{prop:ivw}
Let $\mathcal L(W) = \sum_{t=1}^T \frac{1}{2\sigma_t^2} (W_{j_t k_t} - z_t)^2$ be the log-likelihood of a sequence of noisy measurements $z_t = \Sigma_{j_t k_t} + \mathcal{N}(0, \sigma^2_t)$ for $t = 1, \ldots, T$. This can be rewritten as
\[
\mathcal L(W) = \sum_{(j,k) \in S} \lambda_{jk}\,(W_{jk} - y_{jk})^2 + C,
\]
where 
$\mathbb T_{jk} = \{t : (j_t, k_t) = (j, k)\}$,\,
$\lambda_{jk} = \sum_{t \in \mathbb{T}_{jk}} \sigma_t^{-2}$,\,
$y_{jk} = \lambda_{jk}^{-1}\sum_{t \in \mathbb{T}_{jk}} \sigma_t^{-2} z_t$,\,
$S = \{(j, k) : \mathbb{T}_{jk} \neq \emptyset\}$,
and $C$ is a constant independent of $W$.
\end{proposition}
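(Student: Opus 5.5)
The plan is to group the terms of $\mathcal L(W)$ by the matrix entry they involve and then complete the square within each group. Since the sets $\mathbb T_{jk}$ for $(j,k) \in S$ partition $\{1,\ldots,T\}$, we can write
\[
\mathcal L(W) = \sum_{(j,k)\in S} \; \sum_{t \in \mathbb T_{jk}} \frac{1}{2\sigma_t^2}\,(W_{jk} - z_t)^2 .
\]
This reduces the claim to a scalar identity for each fixed $(j,k)$, with $w = W_{jk}$ treated as a single real variable.

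For a fixed $(j,k)$, we expand $\sum_{t\in\mathbb T_{jk}} \sigma_t^{-2}(w - z_t)^2$ as
\[
w^2 \sum_t \sigma_t^{-2} \;-\; 2w \sum_t \sigma_t^{-2} z_t \;+\; \sum_t \sigma_t^{-2} z_t^2 .
\]
By the definitions of $\lambda_{jk}$ and $y_{jk}$, this equals
\[
\lambda_{jk} w^2 - 2\lambda_{jk} y_{jk} w + \sum_t \sigma_t^{-2} z_t^2
= \lambda_{jk}(w - y_{jk})^2 + \Bigl(\sum_t \sigma_t^{-2} z_t^2 - \lambda_{jk} y_{jk}^2\Bigr).
\]
The bracketed remainder depends only on the data $\{z_t\}$ and the variances $\{\sigma_t^2\}$, not on $W$. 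We use that $\lambda_{jk} > 0$, which holds exactly because $\mathbb T_{jk} \neq \emptyset$ for $(j,k) \in S$, so $y_{jk}$ is well defined.

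Summing over $(j,k)\in S$ gives the quadratic form $\sum_{S}\lambda_{jk}(W_{jk}-y_{jk})^2$ plus a $W$-independent constant $C$. One bookkeeping point needs care: the factor $\tfrac12$ carried by the original likelihood. Done literally, the computation gives $\tfrac12\sum_S \lambda_{jk}(W_{jk}-y_{jk})^2 + C$. I would therefore either state the identity with this factor or note that an overall positive scaling does not change the set of minimizers. Since Problem~\eqref{eq:objective} and Algorithm~\ref{alg:main-algorithm} depend on $\mathcal L$ only through $\arg\min_{W\succeq 0}\mathcal L(W)$, the two normalizations give the same reconstruction.

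With either normalization, the rewritten objective matches the single-measurement model~\eqref{eq:measurement-model} with $\tau_{jk}^2 = \lambda_{jk}^{-1}$. The only real obstacle is this constant-factor bookkeeping; the rest is an elementary complete-the-square argument.
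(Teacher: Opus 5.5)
Your proof is correct and follows the paper's argument: you group the terms by entry using the partition $\{\mathbb T_{jk}\}_{(j,k)\in S}$ and complete the square, which the paper packages as a separate weighted-sum-of-squares lemma. Your remark about the factor $\tfrac12$ is well taken: the paper's proof silently drops it in its first line, so the stated identity holds only up to that overall positive scaling, which does not change the set of minimizers used in the reconstruction.
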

The proof is in Appendix~\ref{app:ivw}.
\PrivateGGM{} maintains the precision-weighted average $y_{jk}$ and combined precision $\lambda_{jk}$ for each entry and updates them with each new measurement.


\subsection{Solving the reconstruction problem}
\label{sec:optimization}

\paragraph*{Optimization approach} 
There are multiple possible approaches to solve Problem~\ref{eq:objective}. 
One option is a two-stage approach suggested by the proof of Proposition~\ref{prop:sparse-inverse}: first solve the inner problem to find any minimizer $W^*$, extract the unique values $a_{jk} = W^*_{jk}$ for $(j, k) \in S$, and then solve the covariance selection problem with $(a_{jk})_{(j,k) \in S}$ as the observed entries.
However, we found this to be unstable in practice: with noisy enough observations, it is often the case that \emph{every} matrix in $\argmin_{W \succeq 0} \mathcal L(W)$ is singular and therefore the covariance selection problem is ill posed (for details, see the discussion in Appendix~\ref{app:ill-posedness}).

Instead, we propose an interior point method where $\Sigma$ is parameterized by a Cholesky factor (\textsc{IPM-Cholesky}). 
The interior point method solves a sequence of strictly convex barrier-penalized problems and provides a clean way to handle ill posed problems.
For $\mu > 0$, define
\begin{equation}
  \label{eq:barrier-problem}
\Sigma^*_\mu = \argmin_{W \succeq 0} (\mathcal L(W) - \mu \log|W|).
\end{equation}
It can be shown that $\Sigma^*_\mu$ is uniquely determined for each $\mu > 0$ and converges to a unique $\Sigma^* \in \argmin_{W \succeq 0} \mathcal L(W)$ as $\mu \to 0$~\citep{boyd2004convex}.
When the condition of Proposition~\ref{prop:sparse-inverse} holds, $\Sigma^*$ is the unique positive definite solution to Problem~\ref{eq:objective}. 
When the condition does not hold, all minimizers have entropy of $-\infty$; selecting $\Sigma^*$ is well motivated as it is the limit of a sequence of entropy regularized solutions. 

It remains to solve the barrier-penalized problem~\eqref{eq:barrier-problem}.
To handle the PSD constraint, we parameterize $\Sigma$ via its Cholesky decomposition $\Sigma = LL^\top$, where $L$ is lower triangular with nonnegative diagonal entries.
The log-determinant becomes $\log|\Sigma| = 2\sum_{j=1}^d \log L_{jj}$.
Then, for a sequence of geometrically decreasing values of $\mu$, we minimize the barrier-penalized objective
\begin{align}
\varphi_\mu(L) = \sum_{(j,k)\in S}\frac{((LL^\top)_{jk} - y_{jk})^2}{2\tau_{jk}^2}
- 2\mu\sum_{j=1}^d \log L_{jj}
\end{align}
via L-BFGS-B on $\varphi_\mu$ with bound constraints $L_{jj} \geq 0$. 
A downside is that $\varphi_\mu(L)$ is not convex in $L$, so global convergence is not guaranteed. 
However, we found performance to be excellent in practice, and this method is closely related to Burer-Monteiro methods for semidefinite programming~\citep{burer2003nonlinear}, which are widely reported to perform well in practice and provably converge under certain conditions~\citep{journee2010low}.
Notably, the per-iteration cost to compute $\varphi_\mu(L)$ and its gradient is only $O(|S|\, d)$---dominated by the time to compute $(LL^\top)_{jk}$ for each $(j,k) \in S$---and, if $S$ is sparse, avoids the typical $\Theta(d^3)$ per-iteration cost for dealing with PSD matrices.

\paragraph{Sparsity-aware optimization}
The set $S$ determines a graph $G$ on the vertex set $\{1, \ldots, d\}$ with an edge $(j,k)$ whenever $(j,k) \in S$. 
Problem~\ref{eq:objective} is closely related to sparse matrix nearness~\citep{sun2015decomposition} and covariance selection~\citep{dahl2008covariance} problems.
By embedding $G$ in a chordal graph with maximal cliques of size $d_1, \ldots, d_m$, relevant linear algebraic operations can be performed in time $O(\sum_{i=1}^m d_i^3)$ via clique trees rather than $O(d^3)$ time needed for dense matrices, which is beneficial when the cliques are small.
We explored such methods and found them promising for very large problems, but they required non-trivial adaptations and yielded at most modest gains at the scales considered here ($d \leq 260$), so we leave further exploration to future work.

On the other hand, since \PrivateGGM{} starts with only diagonal measurements and then adds a single entry per round, the measurement sets $S$ are often extremely sparse, and we found that a simple decomposition based on connected components yielded significant gains. 
Suppose $G$ has connected components $V_1, \ldots, V_K$. 
Since $S$ contains no edges between components, $\mathcal L(W)$ depends only on the diagonal blocks $W_{V_i, V_i}$. 
Among feasible $W$, Fischer's inequality $|W| \leq \prod_i |W_{V_i, V_i}|$ (with equality iff $W$ is block diagonal) implies the maximum of $\log|W|$ is attained at a block-diagonal matrix. 
Hence $\Sigma^*$ is block diagonal and each block solves an independent instance of Problem~\ref{eq:objective} on $V_i$. 
In the extreme case where $S$ contains only the diagonal entries, each block is a single vertex, so the solution is simply the diagonal matrix with entries $y_{jj} \vee 0$ (cf. $\hat \Sigma^{(0)}$ in Algorithm~\ref{alg:initialization}).

\subsection{Budget annealing}
A key challenge in adaptive measurement algorithms is deciding how many measurements to make vs. relying on reconstruction to fill in unmeasured quantities.
A good choice typically depends on the privacy budget, the problem size, and data properties.
A key contribution of AIM was to introduce a budget annealing strategy to address this challenge~\citep{mckenna2022aim}, which we adapt here (Algorithm~\ref{alg:budget-annealing}). 
We start with a per-round budget that would allow running for up to $T$ rounds, where $T$ is a generous bound set by the user.
Then, in Line 1 of Algorithm~\ref{alg:budget-annealing}, the procedure checks how much the estimate $\hat \Sigma_{jk}$ changes compared to the previous round after $\Sigma_{jk}$ is measured.
If the change is small relative to the noise standard deviation $\sigma_t$ (the quantity $\sqrt{2/\pi}\,\sigma_t$ is $\mathbb E[|\xi|]$ for $\xi \sim \mathcal N(0, \sigma_t^2$)), 
it indicates the current noise is too large to provide informative measurements, so the per-round privacy budgets are increased (Line 2). 
This will lead to fewer but more informative future measurements. 
Lines 4--6 of Algorithm~\ref{alg:budget-annealing} are a simple measure to ensure the privacy budget is fully exhausted in the final round.


\subsection{Privacy guarantee}
\label{sec:privacy-analysis}

\begin{theorem}[Privacy of Algorithm~\ref{alg:main-algorithm}]
\label{thm:privacyproof}
Algorithm~\ref{alg:main-algorithm} satisfies $\rho$-zCDP.
\end{theorem}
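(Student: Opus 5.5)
The plan is to view Algorithm~\ref{alg:main-algorithm} as an adaptive composition of three kinds of private mechanisms: the $d$ diagonal Gaussian measurements in Algorithm~\ref{alg:initialization}, one exponential-mechanism selection per round, and one Gaussian measurement per round. Everything else is post-processing of their outputs and of public quantities ($B$, $n$, $\rho$, $T$, $\alpha$, $\beta$). This covers the reconstruction step, the precision-weighted updates of $y_{jk},\lambda_{jk}$, and the budget annealing rule in Algorithm~\ref{alg:budget-annealing}. Proposition~\ref{prop:composition}, in its fully adaptive form (the \citep{whitehouse2023fully} reference), then gives a total privacy cost equal to the sum of the per-step costs. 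It remains to check that each step costs what the algorithm charges and that the charges sum to at most $\rho$.

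\textbf{Step 1 (sensitivities).} For a single entry, $\Sigma_{jk}(X)=\frac1n\sum_i x^{(i)}_j x^{(i)}_k$. Changing one record changes one term, and each term lies in $[-B^2,B^2]$, so the sensitivity is at most $2B^2/n=\Delta$. The same bound holds for the score $\mathrm{err}(j,k)=|\Sigma_{jk}(X)-\hat\Sigma^{(t-1)}_{jk}|$. Here $\hat\Sigma^{(t-1)}$ is a function only of previously released outputs, so it is fixed when we condition on the history, and the triangle inequality gives sensitivity $\Delta$.

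\textbf{Step 2 (per-step costs).} The initialization releases the vector $(\Sigma_{jj}(X))_{j}$ with noise variance $\sigma^2=\Delta^2 d/(2\rho_{\text{diag}})$. Each coordinate costs $\Delta^2/(2\sigma^2)=\rho_{\text{diag}}/d$ by Proposition~\ref{prop:gaussian}, and composition over the $d$ coordinates gives $\rho_{\text{diag}}$. Treating the vector as a single query with $\ell_2$ sensitivity $\sqrt d\,\Delta$ gives the same total. The selection with $\epsilon=\sqrt{8\rho_{\text{sel}}}$ costs $\epsilon^2/8=\rho_{\text{sel}}$ by Proposition~\ref{prop:exponential}. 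The measurement with $\sigma_t^2=\Delta^2/(2\rho_{\text{meas}})$ costs $\rho_{\text{meas}}$.

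\textbf{Step 3 (budget accounting).} This is the main obstacle, because the per-round budgets are chosen adaptively by the annealing rule and the number of rounds is random. The budgets are functions of past private outputs only, so they are legitimate adaptive parameters under fully adaptive composition. The argument is an invariant: at the start of each iteration, $\rho_{\text{used}}$ equals the total zCDP spent so far, and $\rho_{\text{sel}}+\rho_{\text{meas}}\le\rho-\rho_{\text{used}}$.
\begin{itemize}
\item \emph{Initially}, $\rho_{\text{sel}}+\rho_{\text{meas}}=(\rho-\rho_{\text{diag}})/T\le\rho-\rho_{\text{used}}$, so the invariant holds.
\item \emph{After each round}, the annealing may double or quadruple the budgets. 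Lines 4--6 then check whether the remaining budget is less than twice the next round's cost. If so, they set $\rho_{\text{sel}}+\rho_{\text{meas}}$ to exactly the remaining budget $\rho-\rho_{\text{used}}$. Otherwise the remaining budget is at least $2(\rho_{\text{sel}}+\rho_{\text{meas}})$, which exceeds the next round's cost. Either way the invariant is restored.
\end{itemize}
The loop runs only while $\rho_{\text{used}}<\rho$, so the cumulative cost never exceeds $\rho$. Termination follows because the final clamped round uses exactly the remainder, setting $\rho_{\text{used}}=\rho$.

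Combining the three steps with Proposition~\ref{prop:composition} gives $\rho$-zCDP for the released $\hat\Sigma^{(t)}$.
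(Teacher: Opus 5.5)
Your proposal is correct and follows the paper's route. It charges $\rho_{\text{diag}}$ for the $d$ diagonal Gaussian releases and $\rho_{\text{sel}}+\rho_{\text{meas}}$ per round via fully adaptive zCDP composition, treating reconstruction and annealing as post-processing. Your explicit invariant $\rho_{\text{sel}}+\rho_{\text{meas}}\le\rho-\rho_{\text{used}}$ and your check that $\mathrm{err}(j,k)$ has sensitivity $\Delta$ spell out steps the paper only asserts when it says the algorithm ``enforces'' $\sum_t(\rho_{\text{sel}}^{(t)}+\rho_{\text{meas}}^{(t)})\le(1-\alpha)\rho$.
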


\begin{proof}
We analyze each phase separately and apply zCDP composition
(Proposition~\ref{prop:composition}).
Recall that the sensitivity of each entry of $\Sigma$ is $\Delta = 2B^2/n$ (Appendix~\ref{app:sensitivity}).

\textit{Phase 1.} Each diagonal entry is released via the Gaussian mechanism
with sensitivity $\Delta$ and per-entry budget $\rho_{\mathrm{diag}}/d$, satisfying $(\rho_{\mathrm{diag}}/d)$-zCDP per entry. 
By composition over $d$ independent releases, Phase~1 satisfies
$\rho_{\mathrm{diag}}$-zCDP where $\rho_{\mathrm{diag}} = \alpha\rho$.

\textit{Phase 2.} Each round $t$ consists of (i) an exponential mechanism
selection step satisfying $\rho_{\mathrm{sel}}^{(t)}$-zCDP and (ii) a
Gaussian mechanism measurement step satisfying
$\rho_{\mathrm{meas}}^{(t)}$-zCDP. Since the selection at round $t{+}1$
depends on the output of round $t$, this is an adaptive composition;
by~\citet{whitehouse2023fully}, each round satisfies
$(\rho_{\mathrm{sel}}^{(t)} + \rho_{\mathrm{meas}}^{(t)})$-zCDP.
The algorithm tracks cumulative expenditure $\rho_{\mathrm{used}}$ and
enforces
$\sum_t(\rho_{\mathrm{sel}}^{(t)} + \rho_{\mathrm{meas}}^{(t)})
\leq (1-\alpha)\rho$.

Therefore the total privacy cost across both phases is at most $\rho$.
The maximum entropy reconstruction step is post-processing and incurs no additional privacy cost (Proposition~\ref{prop:composition}).
\end{proof}

\section{Experiments}\label{sec:experiments}

We evaluate \PrivateGGM{} on real-world datasets to assess its ability to accurately estimate covariance matrices under zCDP across a range of privacy budgets compared to non-adaptive baselines.
We work with diverse datasets with dimensionality ranging from $d=6$ to $d=260$. 
Appendix~\ref{app:datasets} provides detailed dataset descriptions, including key structural properties in Table~\ref{tab:dataset-structure}. 
For each dataset, we remove rows with missing values,
and rescale all features to $[-1,1]$ coordinate-wise, so that the $\ell_\infty$-ball
constraint $\|x\|_\infty \leq B = 1$ holds.
To isolate the covariance estimation component, we assume the mean is known and use it to center the data.
In practice, part of the privacy budget would be allocated to mean estimation.\cf{cut the private mean part here}
The datasets have a wide range of empirical covariance structures: the fraction of the squared Frobenius norm of $\Sigma$ that comes from off-diagonal entries ($e_{\mathrm{off}}$) varies from
near-zero for \texttt{adult} ($0.018$) and \texttt{thyroid\_ann} ($0.026$), which are approximately
diagonal, to $0.89$ for \texttt{communities\_crime}, where off-diagonal structure dominates.
This diversity helps investigate estimation under varying underlying structure. 



We evaluate \PrivateGGM{} over privacy budgets
$
\rho \in \{10^{-4}, 10^{-3}, 10^{-2}, 10^{-1}, 1, 2, 10\}
$.
We set $\alpha = 0.3$ (fraction of budget for diagonal initialization) and $\beta = 0.5$ (fraction of per-round budget for entry selection). An ablation study is included in Appendix~\ref{app:hyperparameters}.
The maximum number of rounds is set to $T = \lfloor 2 \cdot d(d-1)/2 \rfloor$, allowing up to twice as many measurements as there are
off-diagonal entries.
Remeasurement of previously measured entries, including the diagonal entries, is allowed.
All experiments use the \textsc{IPM-Cholesky} solver (Algorithm~\ref{alg:ipm}, Appendix~\ref{app:ipm}).
Each configuration is repeated for 10 independent trials using pre-specified random seeds.


\paragraph*{Error metrics}
We report two complementary metrics: (i) \textit{Mahalanobis error}
$\|\Sigma^{-1/2}\hat{\Sigma}\Sigma^{-1/2} - I\|_F$, which measures
shape accuracy by normalizing entry-wise error relative to the true
covariance structure and is invariant to affine transformations; and
(ii) \textit{Frobenius error} $\|\hat{\Sigma} - \Sigma\|_F$, which
measures entry-wise accuracy directly.




    

\paragraph*{Baselines}
We compare \PrivateGGM{} against several baselines. 
\textit{SSP} (Sufficient Statistic Perturbation) applies the Gaussian mechanism to all $d(d+1)/2$ entries of $\Sigma$ using the full budget $\rho$ and full-matrix sensitivity~\citep{dwork2014analyze}. 
\textit{AdaptiveCov}~\citep{dong2022differentially} selects between an \SSP{} estimator (GaussCov) and a spectral approach (SeparateCov) that privatizes eigenvalues and eigenvectors via Gaussian mechanisms and reconstructs the covariance; here, adaptivity refers to mechanism selection rather than entry-wise selection.
Finally, a \textit{diagonal-only baseline} allocates the entire budget to the $d$ diagonal entries (with $\rho/d$ per entry), sets off-diagonal entries to zero, and projects the result to the PSD cone, illustrating the cost of ignoring correlations (Appendix~\ref{app:plot-all})\footnote{
  CoinPress~\citep{biswas2020coinpress} is another well known private covariance estimator, where the main idea is to iteratively refine bounds on the scale of covariance matrix when only a coarse upper bound is available, so the empirical covariance can be measured (using \SSP{}) with less noise. 
  In preliminary experiments, CoinPress performed poorly in our setup.
  However, it is essentially solving an orthogonal problem to ours (finding the right scale for clipping data), which may be less relevant in our setting where we already provide tight $\ell_\infty$ bounds on the data, so we do not include it in our empirical comparisons.
}. 

\subsection{Results}\label{sec:paceggm-mainresults}

Figure~\ref{fig:plot-main} shows results for 5 of the 14 datasets across $d$. Appendix~\ref{app:plot-all} includes results for all datasets. \PrivateGGM{} overall outperforms \SSP{} and often achieves lower error than AdaptiveCov on both metrics. This illustrates that \PrivateGGM{} is competitive with state-of-the-art DP covariance estimators, often with the lowest error in the high-privacy (low $\rho$) regime, and as dimension increases. 

Two additional findings are reported in Appendix~\ref{app:solver}. 
First, we show that the maximum-entropy criterion consistently selects a high quality solution among all optimizers of the reconstruction objective $\mathcal L(W)$, while simply minimizing the objective via projected gradient descent (PGD) yields a solution that is sensitive to initialization and may be much worse than the maximum-entropy solution. Second, we show that exploiting sparsity in the measurement graph via connected components accelerates optimization without affecting estimation accuracy.

Table~\ref{tab:measurements_main} reports the average number of off-diagonal entries 
measured (lower triangle) across datasets and $\rho$ values. Three patterns emerge:

\textbf{Measurements grow with $\rho$.} As privacy budget increases, 
\PrivateGGM{} measures more off-diagonal entries across all datasets. At 
$\rho=10^{-4}$, essentially only the diagonal and a handful of off-diagonal 
entries are ever selected; as $\rho$ increases, the algorithm makes progressively more measurements.

\textbf{Sparsity grows with $d$.}
As $d$ increases, \PrivateGGM{} measures a very small fraction of 
available off-diagonal entries. At $\rho=10$, the fraction ranges from $87\%$ 
for \texttt{adult} ($d=6$, 12 out of 15 off-diagonal pairs) down to $0.1\%$ 
for \texttt{madeline} ($d=260$, 39 out of 33{,}930). This sparsity shows how \PrivateGGM{} concentrates the privacy budget on the entries that both carry significant information and can be measured informatively.
The larger $d$, the more this selective allocation matters, which explains why the advantage over \SSP{} grows with dimensionality.

\textbf{Measurements concentrate where covariance is denser.}
The number of selected entries is also governed by the density of the 
underlying covariance. We quantify density 
via $e_{\mathrm{off}} = \|\Sigma_{\mathrm{off}}\|_F^2 / \|\Sigma\|_F^2$, 
the fraction of squared Frobenius norm from off-diagonal entries.
At $\rho=10$, \texttt{communities\_crime} ($d=102$, 
$e_{\mathrm{off}}=89.5\%$) requires 255 off-diagonal measurements, more 
than six times as many as \texttt{madeline} ($d=260$, 
$e_{\mathrm{off}}=24.8\%$) despite having less than half the dimension. 
\PrivateGGM{} allocates more measurements where $e_{\mathrm{off}}$ is higher,
which demonstrates that it adapts to covariance structure and not just dimension.
Appendix~\ref{app:plot-all} compares \PrivateGGM{} to a diagonal-only (independent) baselines that only privatizes the diagonal, ignoring off-diagonal correlations: as expected, it is competitive with \PrivateGGM{} only on datasets with nearly diagonal covariance.

Together, these results show that \PrivateGGM{} often improves upon data-independent methods by measuring entries adaptively with respect to the dimension, budget, and underlying covariance structure.


\begin{figure}[ht]
\centering
\includegraphics[width=0.85\linewidth]{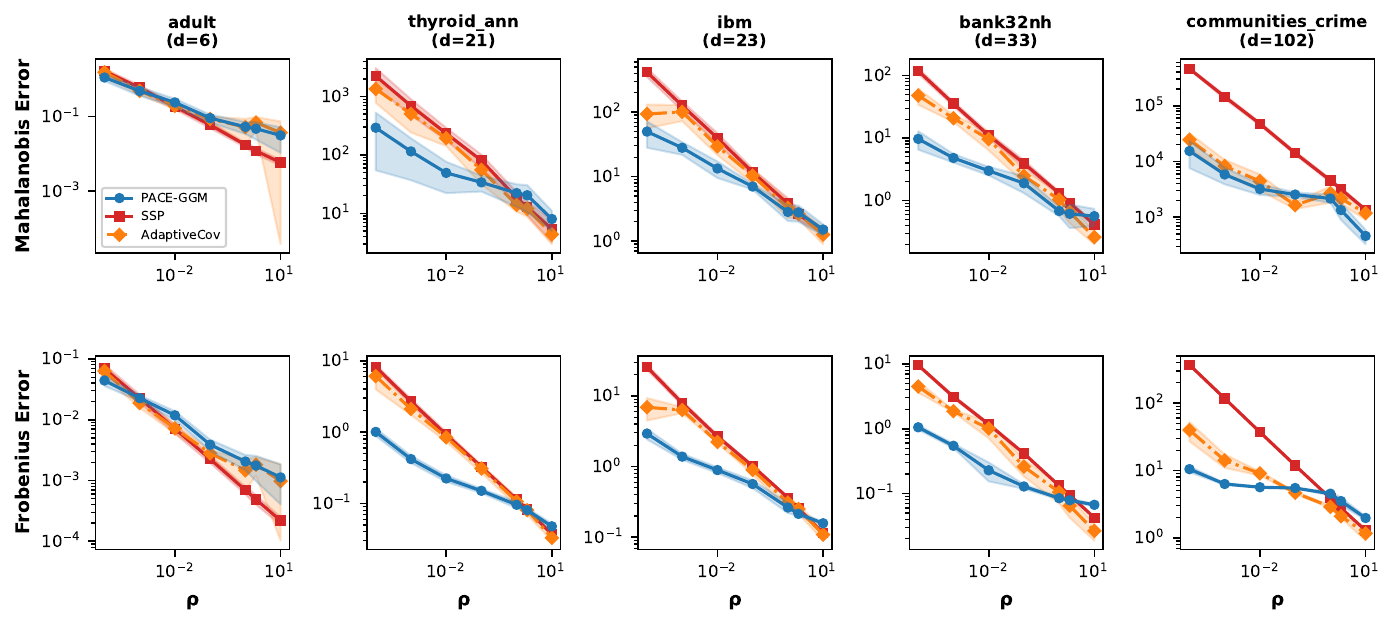}
\caption{Mahalanobis error (top row) and Frobenius error (bottom row) for a selection of datasets across $d$. \PrivateGGM{} and baselines across privacy budgets $\rho$, with $\alpha=0.3$, $\beta=0.5$. Datasets are ordered by dimensionality $d$ across three groups.}
\label{fig:plot-main}
\end{figure}

\begin{table}[t]
\centering
\resizebox{\linewidth}{!}{%
\begin{tabular}{llrrrrrrrr}
\toprule
Dataset & $d$ & $\frac{d(d+1)}{2}$ & $\rho=10^{-4}$ & $\rho=10^{-3}$ & $\rho=10^{-2}$ & $\rho=10^{-1}$ & $\rho=1$ & $\rho=2$ & $\rho=10$ \\
\midrule
adult                 & 6   & 21     & $6+2.6$   & $6+2.8$   & $6+3.4$   & $6+7.4$    & $6+11.6$   & $6+12.3$   & $6+12.2$   \\
thyroid\_ann          & 21  & 231    & $21+5.0$  & $21+5.1$  & $21+6.2$  & $21+8.6$   & $21+19.0$  & $21+24.6$  & $21+42.5$  \\
ibm                   & 23  & 276    & $23+5.4$  & $23+5.8$  & $23+7.1$  & $23+11.6$  & $23+22.2$  & $23+28.4$  & $23+38.1$  \\
bank32nh              & 32  & 528    & $32+6.1$  & $32+8.1$  & $32+10.9$ & $32+12.1$  & $32+18.4$  & $32+25.5$  & $32+49.7$  \\
communities\_crime    & 102 & 5253   & $102+7.7$ & $102+8.8$ & $102+11.0$& $102+23.3$ & $102+97.5$ & $102+130.2$& $102+255.3$\\
\bottomrule
\end{tabular}%
}
\caption{Mean number of measurements per trial (diagonal$+$off-diagonal). The column $d(d+1)/2$ gives the total number of unique entries in the covariance matrix.
}
\label{tab:measurements_main}
\end{table}

\section{Discussion}
\label{sec:discussion}

\paragraph{Limitations}
\PrivateGGM{} assumes continuous data with coordinate-wise bounds $\|x\|_\infty \leq B$.
As discussed earlier, if the modeler knows tight $\ell_2$ bounds that are not much bigger than $B$, there may be little advantage of measuring indvidual entries.
Performance on heavy-tailed or strongly non-Gaussian data has not been evaluated.
The per-round cost is $O(d^3)$ in the worst case, which may become limiting for very large $d$. 
The two hyperparameters $\alpha$ and $\beta$ have robust defaults
($\alpha{=}0.3$, $\beta{=}0.5$) and the method is empirically insensitive to their
exact values in $[0.1,0.5]\times[0.1,0.5]$ (see Appendix~\ref{app:hyperparameters}), but no fully private selection procedure
is provided.
Finally, our experimental evaluation focuses on \SSP{} and AdaptiveCov;
comparisons to additional baselines are left for future work.

\paragraph{Future directions}
Directions for future work include investigating theoretical bounds on the error of the adaptive selection procedure and its performance for estimating of population parameters of different statistical models. 
Computationally, a promising avenue is to develop a solver that uses chordal embeddings of the sparsity graph to reduce running times relative to the dense solver; this could allow scaling to very large $d$ with sparse measurements.

\section{Conclusion}
\label{sec:conclusion}

We introduced \PrivateGGM{}, a differentially private empirical covariance estimator
that adaptively selects which entries to measure based on the data.
By combining the exponential mechanism for entry selection,
the Gaussian mechanism for measurement, inverse-variance weighting for
repeated measurements, and maximum-entropy PSD reconstruction,
\PrivateGGM{} concentrates its privacy budget where most needed rather than
spreading noise uniformly across all $d(d+1)/2$ entries.
The $\ell_\infty$-ball assumption is natural for tabular data with clipped or
normalized features and enables our adaptive approach, since under this assumption
the single-entry sensitivity is a factor of $d/\sqrt{2}$ smaller than full-matrix methods, allowing a method to measure a subset of entries much more accurately than the full matrix.
We proved that the algorithm satisfies $\rho$-zCDP via adaptive composition and
demonstrated consistent improvements over non-adaptive baselines across fourteen real-world datasets spanning $d \in [6, 260]$, with gains most
pronounced in the high-dimensional, low-to-moderate budget regime that is most
demanding in practice.
\PrivateGGM{} is suitable as a building block for continuous synthetic data generation, filling a gap left by marginal-based methods that target discrete or discretized variables.

\bibliographystyle{plainnat}
\bibliography{pace-ggm}

\appendix

\section{Sensitivity analysis}\label{app:sensitivity}

In this Appendix, we derive the sensitivity bounds for the matrix $\Sigma(X) = \frac{1}{n} \sum_{i=1}^n x^{(i)}x^{(i)T}$ under $\ell_2$ and $\ell_\infty$ ball bounding assumptions.
We consider separartely the sensitivity of the full matrix, and a single entry.
Table~\ref{tab:cov_sensitivity} summarizes the sensitivity bounds.

\begin{table}[H]
\centering
\caption{Sensitivity bounds for the matrix $\Sigma(X) = \frac{1}{n} \sum_{i=1}^n x^{(i)}x^{(i)T}$ assuming either $\mathcal X = \{x \in \mathbb R^d: \|x\|_\infty \leq B\}$ (i.e., $\ell_\infty$ bound) or $\mathcal X = \{x \in \mathbb R^d: \|x\|_2 \leq C\}$ (i.e., $\ell_2$ bound).}
\label{tab:cov_sensitivity}
\renewcommand{\arraystretch}{1.2}
\begin{tabular}{@{}lcc@{}}
\toprule 
 & $\ell_\infty$ bound & $\ell_2$ bound \\ 
\midrule 
Full matrix & $\dfrac{\sqrt{2} d B^2}{n}$ & $\dfrac{\sqrt{2} C^2}{n}$ \\[12pt]
Single entry & $\dfrac{2 B^2}{n}$ & $\dfrac{C^2}{n}$ \\
\bottomrule
\end{tabular}
\end{table}

\subsection{Sensitivity of full matrix}

The sensitivity is
\begin{align*}
  \Delta_\Sigma&= \sup_{X \sim X'} \| \Sigma(X') - \Sigma(X) \|_F 
  = \sup_{v, u \in \mathcal X} \bigg\| \frac{1}{n} (vv^T - uu^T) \bigg\|_F
\end{align*}
where $X'$ is obtained from $X$ by adding record $v$ and removing $u$.

Expanding the Frobenius norm and rearranging, we have
\begin{align*}
    \Delta_\Sigma
    &= \frac{1}{n} \sup_{v, u \in \mathcal X} \sqrt{\sum_{i,j} (v_i v_j - u_i u_j)^2} \\
    &= \frac{1}{n} \sup_{v, u \in \mathcal X} \sqrt{\sum_{i,j} \left( v_i^2 v_j^2 + u_i^2 u_j^2 - 2\,v_i v_j\, u_i u_j \right)}\\
    &= \frac{1}{n} \sup_{v, u \in \mathcal X} \sqrt{ \Big( \sum_i v_i^2 \Big)^2 + \Big( \sum_i u_i^2 \Big)^2 - 2 \Big( \sum_i v_i u_i \Big)^2 }\\[1ex]
    &= \frac{1}{n} \sup_{v, u \in \mathcal X} \sqrt{ \|v\|_2^4 + \|u\|_2^4 - 2\,(v\cdot u)^2 } \\
    &\leq \frac{1}{n} \sup_{v, u \in \mathcal X} \sqrt{ \|v\|_2^4 + \|u\|_2^4 }.
\end{align*}

\begin{itemize}[leftmargin=*]
  \item \textbf{$\ell_2$-ball assumption.} If we assume $\|v\|_2,\|u\|_2 \le C$, then $\sqrt{ \|v\|_2^4 + \|u\|_2^4 } \le \sqrt{C^4 + C^4} = \sqrt{2}\,C^2$, so $\Delta_\Sigma \leq \frac{\sqrt{2}\,C^2}{n}$. 
  Further, assuming $d \geq 2$ all inequalities above are tight and the bound is achieved when $v = C e_i$ and $u = C e_j$ for $i \neq j$, where $e_i, e_j$ are standard basic vectors. Thus
  \begin{align*}
     \Delta_\Sigma &= \frac{\sqrt{2}\,C^2}{n}.
  \end{align*}

  \item \textbf{$\ell_\infty$-ball assumption.} If we assume $\|v\|_\infty, \|u\|_\infty \leq B$ then $\|v\|_2, \|u\|_2 \leq C := \sqrt{d}\,B$, so by the case above we have
  \begin{align*}
    \Delta_\Sigma \leq \frac{\sqrt{2}\, d\, B^2}{n}.
  \end{align*}
  This bound is tight when $d$ is even and $v = (B, B, \ldots, B, B)$ and $u = (B, B, \ldots, -B, -B)$ with $d/2$ entries of $B$ and $d/2$ entries of $-B$, so that $\|u|_2 = \|v\|_2 = \sqrt{d} B$ and $v \cdot u = 0$, making all inequalities tight.
\end{itemize}

\subsection{Sensitivity of a single entry}

The sensitivity of a single entry $\Sigma_{jk}(X)$ for $j \neq k$ can be written as
\begin{align*}
    \Delta_{\Sigma_{jk}} = \sup_{X \sim X'} \big|\Sigma_{jk}(X') - \Sigma_{jk} (X) \big| = 
    \frac{1}{n} \sup_{v,u \in \mathcal X} \big|v_j v_k - u_j u_k \big|
\end{align*}
where again $X'$ is obtained from $X$ by adding record $v$ and removing record $u$.

\begin{itemize}[leftmargin=*]
\item \textbf{$\ell_2$-ball assumption.} If we assume $\|v\|_2 \leq C$ and $\|u\|_2 \leq C$, then for $j \neq k$,
\[
|v_j v_k - u_j u_k|
\leq |v_j v_k| + |u_j u_k|
\leq \frac{C^2}{2} + \frac{C^2}{2}
= C^2,
\]
where the bound is tight, for example, when 
$v_j=v_k=u_j=-u_k=C/\sqrt{2}$. For diagonal entries $j=k$, we instead have
\[
|v_j^2-u_j^2| \leq C^2,
\]
since $v_j^2,u_j^2 \in [0,C^2]$. Therefore, for all entries,
\[
\Delta_{\Sigma_{jk}} = \frac{C^2}{n}.
\]

\item \textbf{$\ell_\infty$-ball assumption.} 
If we assume $\|v\|_\infty \leq B$ and $\|u\|_\infty \leq B$, then 
\[
|v_j v_k - u_j u_k| \leq |v_j v_k| + |u_j u_k| \leq B^2 + B^2 = 2B^2,
\]
and the inequalities are tight when $v_j = v_k = u_j = -u_k = B$, so
\[
 \Delta_{\Sigma_{jk}} = \frac{2B^2}{n}.
\]
\end{itemize}





\section{Proofs and analysis for the reconstruction problem}

\subsection{Properties of the maximum-entropy reconstruction}
\label{app:properties}

\begin{proof}[Proof of Proposition~\ref{prop:sparse-inverse}]
Because the objective $\mathcal L(W)$ depends only $W_{jk}$ for $(j, k) \in S$, we can reformulate it as $\mathcal L(W) = f(\pi_S(W))$ where $\pi_S(W) = (W_{jk})_{(j,k) \in S}$ extracts the entries indexed by $S$ and $f(z) = \sum_{(j,k) \in S} \frac{(z_{jk} - y_{jk})^2}{2 \tau_{jk}^2}$ for a vector $z$ indexed by $S$. 
Then, by construction,
\[
\hat W \in \argmin_{W \succeq 0} \mathcal L(W) \quad \iff \pi_S(\hat W) \in \argmin_{z \in Z} f(z), \quad Z := \{\pi_S(W): W \succeq 0\}.
\]
The problem on the right-hand side has a unique minimizer $a$ because $f(z)$ is strictly convex and $Z$ is convex.
Therefore $\hat W \in \argmin_{W \succeq 0} \mathcal L(W)$ if and only if $\pi_S(\hat W) = a$. 
In other words, the set of minimizers of $\mathcal L(W)$ is exactly the set of PSD matrices whose entries in $S$ are equal to $a$.

The outer problem now becomes exactly Dempster's covariance selection problem: find the PSD matrix $\Sigma$ with maximum entropy such that $\Sigma_{jk} = a_{jk}$ for all $(j,k) \in S$.
The result follows directly from Dempster's \emph{existence and uniqueness} property; the required condition that there is a positive definite matrix $\Sigma$ with $\pi_S(\Sigma) = a$ is satisfied by our assumption that $\argmin_W \mathcal L(W)$ contains a positive definite matrix and our characterization of this set.
\end{proof}

\subsection{Discussion of ill posedness}
\label{app:ill-posedness}
In most literature on covariance selection there is little discussion of the problem being ill posed. 
As Dempster showed, as long as there is \emph{some} positive definite matrix matching the specified entries, the covariance selection problem has a unique solution. 
Since the original empirical covariance matrix matches the specified entries, as long as this matrix is positive definite, the problem is well posed.

The situation is significantly different with noisy observations. 
To understand why, it is useful to consider the alternate characterization of the minimizers of $\mathcal L(W)$ in the proof of Proposition~\ref{prop:sparse-inverse}.
The feasible set $Z = \{\pi_S(W): W \succeq 0\}$ is the set of all "PSD-completable" vectors $z$, i.e., those such that there is a PSD matrix $W$ with $\pi_S(W) = z$. 
This is a slice of the PSD cone and is also a convex cone. 
(See the closely related discussion of PSD-completable matrices in~\cite{sun2015decomposition}.) From the form of the objective $f$, we see that the operation $a = \argmin_{z \in Z} f(z)$ computes a coordinate-scaled $\ell_2$ projection of $y$ onto $Z$, where $y = \pi_S(\Sigma) + \text{noise}$.
If the noise scale is large enough, it is very likely that $y$ falls outside $Z$, in which case the projection $a$ will lie on the boundary of $Z$.
A point $a$ on the boundary is such that \emph{every} $W$ with $\pi_S(W) = a$ is singular, meaning that every minimizer of $\mathcal L(W)$ is singular. 
Thus, the reconstruction problem is ill posed: there is no unique solution, and all minimizers have entropy $-\infty$.

As a simple example, consider the case when $S$ contains only indices of diagonal entries. Then $a_{jj} = y_{jj} \vee 0$ for all $j$. If any $y_{jj}$ is negative we have $a_{jj} = 0$, and every PSD matrix with this diagonal is singular.

These considerations are largely resolved by our convention in Section~\ref{sec:optimization} of defining the solution as the limit of a sequence of entropy-regularized solutions.
This gives the unique maximum-entropy solution when the problem is well posed, and otherwise defines the desired solution in a princpled way through the sequence of regularized problems.

\subsection{Inverse-variance weighting for repeated measurements}
\label{app:ivw}
\begin{proof}[Proof of Proposition~\ref{prop:ivw}]
The result follow from manipulating the obective to group terms by entry and then applying Lemma~\ref{lem:ivw} below.
\begin{align*}
\mathcal L(W) 
&= \sum_{t=1}^T \sigma_t^{-2} (W_{j_t k_t} - z_t )^2 \\
&= \sum_{(j, k)} \sum_{t \in \mathbb T_{jk}} \sigma_t^{-2} (W_{jk} - z_t)^2 \\
&= \sum_{(j, k)} \lambda_{jk} (W_{jk} - y_{jk})^2 + C
\end{align*}
where $\lambda_{jk} = \sum_{t \in \mathbb T_{jk} } \sigma_{t}^{-2}$ and $y_{jk} = \frac{1}{\lambda_{jk}} \sum_{t \in \mathbb T_{jk}} \sigma_t^{-2} z_t$.
The final line uses Lemma~\ref{lem:ivw}.
\end{proof}

\begin{lemma}
  \label{lem:ivw}
  $\sum_{q} w_q (x - a_q)^2 = \big(\sum_q w_q\big) (x - \bar a)^2  + C$ where $\bar a := \frac{\sum_q w_q a_q}{\sum_{q} w_q}$ and $C$ is constant with respect to $x$. 
\end{lemma}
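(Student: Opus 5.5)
The plan is to expand both sides as quadratics in $x$ and match the quadratic, linear and constant parts. Write $\Lambda := \sum_q w_q$, which we assume is nonzero; in the setting of Proposition~\ref{prop:ivw} the weights $w_q = \sigma_t^{-2}$ are positive, so $\Lambda > 0$ and $\bar a$ is well defined.

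First, expand the left-hand side:
\[
\sum_q w_q (x - a_q)^2 = \Lambda x^2 - 2x \sum_q w_q a_q + \sum_q w_q a_q^2 .
\]
By the definition of $\bar a$, $\sum_q w_q a_q = \Lambda \bar a$. Substituting this gives $\Lambda x^2 - 2\Lambda \bar a x + \sum_q w_q a_q^2$.

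Next, expand the right-hand side:
\[
\Lambda (x-\bar a)^2 = \Lambda x^2 - 2\Lambda\bar a x + \Lambda \bar a^2 .
\]
The $x^2$ and $x$ terms of the two expressions agree. Their difference is therefore
\[
C = \sum_q w_q a_q^2 - \Lambda \bar a^2 = \sum_q w_q (a_q - \bar a)^2 ,
\]
and this quantity does not depend on $x$. The second form of $C$ follows from the same expansion with $x = \bar a$; it is not needed for the lemma but is a useful check.

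There is no substantive obstacle here. The only care needed is to state the assumption $\Lambda \neq 0$ explicitly so that $\bar a$ is defined. In the application, the index $q$ ranges over $t \in \mathbb T_{jk}$ with $x = W_{jk}$, and the constants obtained for each entry $(j,k)$ are summed to give the constant $C$ in Proposition~\ref{prop:ivw}.
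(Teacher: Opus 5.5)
Your proposal is correct and follows essentially the same route as the paper, which expands the left-hand side, completes the square using $\bar a$, and identifies $C = \sum_q w_q a_q^2 - \frac{(\sum_q w_q a_q)^2}{\sum_q w_q} = \sum_q w_q (a_q - \bar a)^2$. Your matching of coefficients is the same argument, and stating $\Lambda \neq 0$ explicitly is a harmless improvement, since the paper leaves it implicit.
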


\begin{proof}
Expanding the left-hand side:
\begin{align*}
\sum_{q} w_q (x - a_q)^2 
&= \sum_{q} w_q (x^2 - 2xa_q + a_q^2) \\
&= x^2 \sum_q w_q - 2x \sum_q w_q a_q + \sum_q w_q a_q^2 \\
&= \left(\sum_q w_q\right) \left(x^2 - 2x\frac{\sum_q w_q a_q}{\sum_q w_q} + \left(\frac{\sum_q w_q a_q}{\sum_q w_q}\right)^2\right) \\
&\quad + \sum_q w_q a_q^2 - \frac{(\sum_q w_q a_q)^2}{\sum_q w_q} \\
&= \left(\sum_q w_q\right) (x - \bar{a})^2 + C
\end{align*}
where the constant $C = \sum_q w_q a_q^2 - \frac{(\sum_q w_q a_q)^2}{\sum_q w_q} = \sum_q w_q(a_q - \bar{a})^2$ is independent of $x$.
\end{proof}

\section{Solver}\label{app:ipm}
The \textsc{IPM-Cholesky} solver is outline in Algorithm~\ref{alg:ipm}.

\begin{algorithm}[H]
\caption{\textsc{IPM-Cholesky}}\label{alg:ipm}
\begin{algorithmic}[1]
\Require Observations $\mathrm{Obs}$, variances $\mathrm{Var}$, initial barrier $\mu_0$, reduction factor $c_\mu$, minimum barrier $\mu_{\min}$, initial $L_0$ and $\mu_0$
\Ensure Maximum-entropy covariance estimate $\hat{\Sigma}$
\State $\mu \gets \mu_0$

\While{$\mu > \mu_{\min}$}
    \State Set tolerance $\text{tol}(\mu)$ \Comment{Coarser early, finer near $\mu_{\min}$}
    \State $L \gets \textsc{L-BFGS-B}\left(\varphi_\mu,\, L\right)$ with tolerance $\text{tol}(\mu)$
    \State $\mu \gets c_\mu \cdot \mu$
\EndWhile

\State \Return $\hat{\Sigma} \gets LL^\top$
\end{algorithmic}
\end{algorithm}


\section{Datasets}\label{app:datasets}
We evaluate \PrivateGGM{} on fourteen real-world tabular numerical datasets spanning a wide
range of dimensions and correlation structures:

\begin{itemize}
    \item \texttt{Adult} ($d=6$)~\citep{adult_2}
    \item \texttt{SeoulBike} ($d=10$)~\citep{Seoul_bike_sharing_demand_560}
    \item \texttt{LifeExpectancy} ($d=15$)~\citep{kumarrajarshi2018lifeexpectancy}
    \item \texttt{BreastCancerWisconsin} ($d=32$)~\citep{breast_cancer_wisconsin_17}
    \item \texttt{CommunitiesAndCrime} ($d=102$)~\citep{communities_and_crime_183}
    \item From the TALENT benchmark~\citep{talent2025}: \texttt{Parkinsons} ($d=16$), \texttt{thyroid\_ann} ($d=21$), \texttt{ibm} ($d=23$), \texttt{pol\_reg} ($d=26$), \texttt{bank32nh} ($d=32$), \texttt{ailerons} ($d=33$), \texttt{spambase} ($d=57$),
    \texttt{indian\_pines} ($d=187$),
    \texttt{madeline} ($d=260$)
\end{itemize}

\begin{table}[H]
\centering
\caption{Dataset properties. $\bar{|r|}$: mean absolute pairwise correlation. $\max |r|$: maximum absolute pairwise correlation.
$e_{\mathrm{off}} = \|\Sigma_{\mathrm{off}}\|_F^2 / \|\Sigma\|_F^2$: fraction of covariance energy contained in the off-diagonal entries.
$\mathrm{rank}_{\mathrm{eff}}$: effective rank 
$\exp\!\left(-\sum_i p_i \log p_i\right)$ of $\Sigma$,
where $p_i = \nu_i / \sum_j \nu_j$ and $\{\nu_i\}$ are the eigenvalues of $\Sigma$.}
\label{tab:dataset-structure}
\renewcommand{\arraystretch}{1.2}
\begin{tabular}{@{}lrrrrrr@{}}
\toprule
Dataset & $d$ & $n$ & $\bar{|r|}$ & $\max|r|$ & $e_{\mathrm{off}}$ & $\text{rank}_{\mathrm{eff}}$ \\
\midrule
\texttt{adult}                               & 6 & 48{,}842 & 0.059 & 0.144 & 0.018 & 4.6 \\
\texttt{SeoulBike}                       & 10 & 8{,}760 & 0.203 & 0.913 & 0.340 & 5.3 \\
\texttt{LifeExpectancy}                      & 15 & 2{,}864 & 0.398 & 0.954 & 0.656 & 5.1 \\
\texttt{parkinsons}                          & 17 & 5{,}875 & 0.451 & 0.993 & 0.392 & 6.3 \\
\texttt{thyroid\_ann}                        & 21 & 3{,}772 & 0.052 & 0.772 & 0.026 & 10.1 \\
\texttt{ibm}                             & 23 & 1{,}470 & 0.081 & 0.951 & 0.230 & 17.6 \\
\texttt{pol\_reg}                            & 27 & 15{,}000 & 0.103 & 0.790 & 0.165 & 9.9 \\
\texttt{BreastCancerWisconsin}               & 32 & 569 & 0.383 & 1.000 & 0.739 & 5.5 \\
\texttt{bank32nh}                            & 33 & 8{,}192 & 0.015 & 0.523 & 0.071 & 26.4 \\
\texttt{ailerons}                            & 34 & 13{,}750 & 0.152 & 0.999 & 0.746 & 7.3 \\
\texttt{spambase}                            & 58 & 4{,}601 & 0.064 & 0.996 & 0.069 & 8.5 \\
\texttt{communities\_crime}                  & 102 & 1{,}994 & 0.245 & 0.996 & 0.895 & 17.2 \\
\texttt{indian\_pines} & 187 & 9144 & 0.555 & 0.999 & 0.859 & 2.5 \\
\texttt{madeline}                            & 260 & 3{,}140 & 0.016 & 0.991 & 0.248 & 201.2 \\
\bottomrule
\end{tabular}
\end{table}

\newpage
\section{Complete results}\label{app:plot-all}
In this Section, we show results for add datasets, and include comparison to an independent baseline that spends all privacy budget to release the diagonal but ignores off-diagonal correlations. This baseline tends to perform well only in cases where the covariance is nearly diagonal, without dense off-diagonal structure.

\begin{figure}[ht]
\centering
\includegraphics[width=0.85\linewidth]{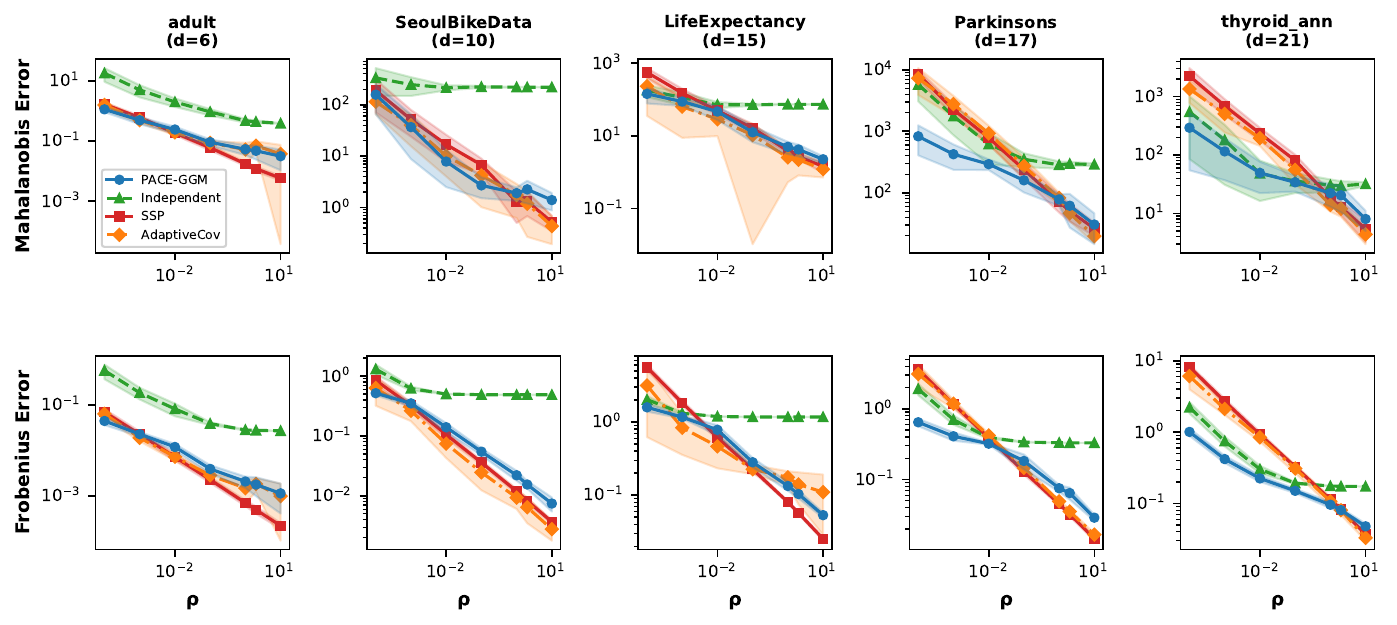}
\includegraphics[width=0.85\linewidth]{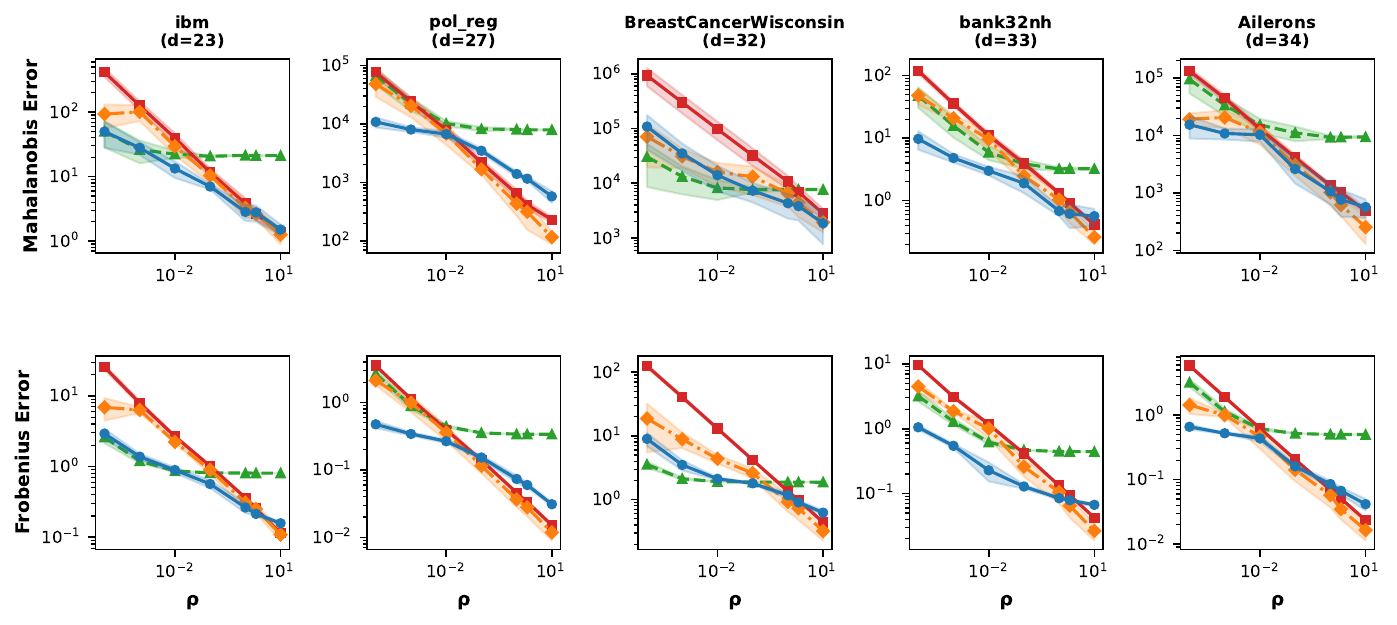}
\includegraphics[width=0.7\linewidth]{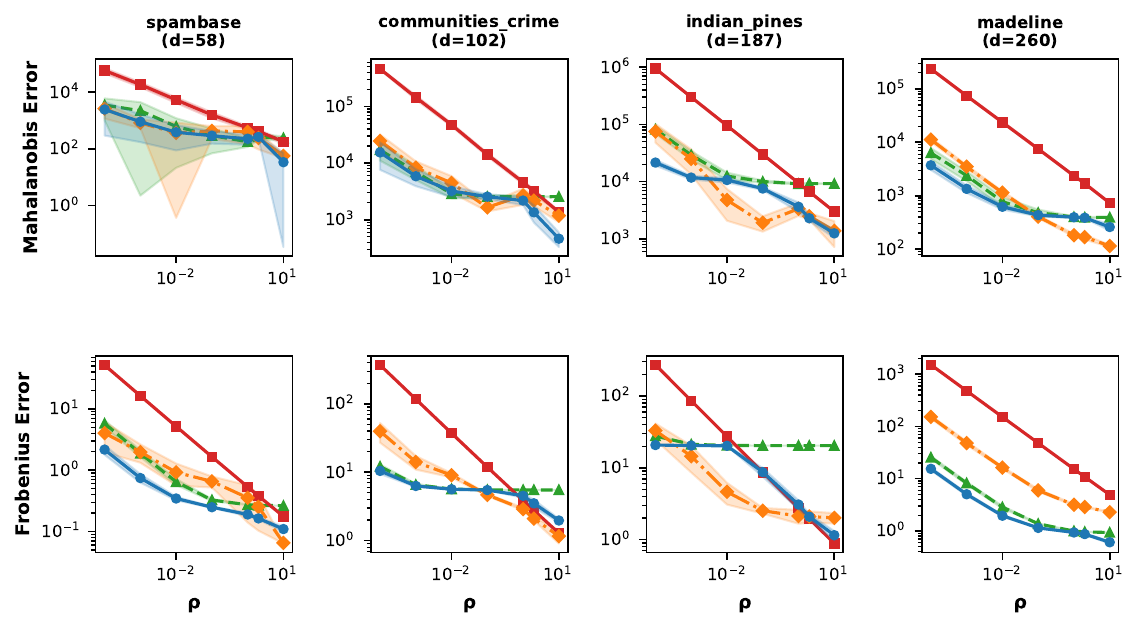}
\caption{Mahalanobis error (top row) and Frobenius error (bottom row) for all datasets. \PrivateGGM{} and baselines across privacy budgets $\rho$, with $\alpha=0.3$, $\beta=0.5$. Datasets are ordered by dimensionality $d$ across three groups.}
\label{fig:plot-all}
\end{figure}

\subsection{Number of measurements}
Table~\ref{tab:allmeasurements} shows the average number of measured entries for all datasets and $\rho$ values. 

\begin{table}[ht]
\centering
\resizebox{\linewidth}{!}{%
\begin{tabular}{llrrrrrrrr}
\toprule
Dataset & $d$ & $\frac{d(d+1)}{2}$ & $\rho=10^{-4}$ & $\rho=10^{-3}$ & $\rho=10^{-2}$ & $\rho=10^{-1}$ & $\rho=1$ & $\rho=2$ & $\rho=10$ \\
\midrule
adult                 & 6   & 21     & $6+2.6$   & $6+2.8$   & $6+3.4$   & $6+7.4$    & $6+11.6$   & $6+12.3$   & $6+12.2$   \\
SeoulBike             & 10  & 55     & $10+3.8$  & $10+9.7$  & $10+16.0$ & $10+21.8$  & $10+28.8$  & $10+30.3$  & $10+37.1$  \\
LifeExpectancy        & 15  & 120    & $15+4.9$  & $15+6.6$  & $15+13.5$ & $15+27.4$  & $15+43.7$  & $15+45.8$  & $15+61.4$  \\
Parkinsons            & 16  & 136    & $16+4.7$  & $16+4.4$  & $16+8.4$  & $16+19.6$  & $16+35.4$  & $16+40.3$  & $16+55.9$  \\
thyroid\_ann          & 21  & 231    & $21+5.0$  & $21+5.1$  & $21+6.2$  & $21+8.6$   & $21+19.0$  & $21+24.6$  & $21+42.5$  \\
ibm                   & 23  & 276    & $23+5.4$  & $23+5.8$  & $23+7.1$  & $23+11.6$  & $23+22.2$  & $23+28.4$  & $23+38.1$  \\
pol\_reg              & 26  & 351    & $26+5.4$  & $26+7.0$  & $26+10.3$ & $26+27.9$  & $26+71.6$  & $26+88.3$  & $26+120.6$ \\
BreastCancerWisconsin & 32  & 528    & $32+5.2$  & $32+5.6$  & $32+6.1$  & $32+9.4$   & $32+23.9$  & $32+27.5$  & $32+49.0$  \\
bank32nh              & 32  & 528    & $32+6.1$  & $32+8.1$  & $32+10.9$ & $32+12.1$  & $32+18.4$  & $32+25.5$  & $32+49.7$  \\
ailerons              & 33  & 561    & $33+5.3$  & $33+8.3$  & $33+13.4$ & $33+26.1$  & $33+53.4$  & $33+58.6$  & $33+95.8$  \\
spambase              & 57  & 1653   & $57+7.8$  & $57+7.8$  & $57+8.5$  & $57+8.1$   & $57+10.9$  & $57+11.5$  & $57+16.9$  \\
communities\_crime    & 102 & 5253   & $102+7.7$ & $102+8.8$ & $102+11.0$& $102+23.3$ & $102+97.5$ & $102+130.2$& $102+255.3$\\
indian\_pines & 187 & 17578 & $187+10.1$ & $187+18.8$ & $187+47.9$ & $187+199.2$ & $187+372.5$ & $187+410.2$ & $187+518.0$ \\
madeline              & 260 & 33930  & $260+9.1$ & $260+9.5$ & $260+9.9$ & $260+11.3$ & $260+15.0$ & $260+19.7$ & $260+39.2$ \\
\bottomrule
\end{tabular}%
}
\caption{Mean number of measurements per trial (diagonal $+$ off-diagonal). The column $d(d+1)/2$ gives the total number of unique entries in the covariance matrix (the maximum possible number of measurements).}
\label{tab:allmeasurements}
\end{table}

\section{Solver Findings}
\label{app:solver}

At each round, the privacy budget is allocated over a subset of variable pairs, defining a \emph{measurement graph} with one node per variable and one edge per observed covariance entry $\Sigma_{jk}$. The PSD reconstruction is then solved over this graph.

We study two aspects of the solver: (i) the choice of optimization method, and (ii) the use of sparsity via connected components.

\paragraph{IPM vs.\ PGD.}
We compare our maximum-entropy formulation, solved via an interior-point method (IPM), to projected gradient descent (PGD) applied directly to the PSD projection problem. PGD requires initialization for unobserved entries of the precision matrix; we consider two simple strategies: filling these entries with $0$ or $1$.

Figure~\ref{fig:pgd-ipm} shows that PGD is highly sensitive to initialization. While zero-initialization yields performance comparable to IPM, one-initialization overall leads to increased error and runtime, particularly in higher dimensions. In contrast, IPM consistently converges to a stable solution without requiring initialization tuning. This supports our use of the maximum-entropy objective as a principled and robust reconstruction method.

\paragraph{Connected components.}
When the measurement graph is disconnected, the PSD reconstruction decomposes exactly across connected components. If the graph consists of components of sizes $\{d_i\}$, the computational cost reduces from $O(d^3)$ to $O\!\left(\sum_i d_i^3\right)$, without approximation.

Figure~\ref{fig:ipm-cc} shows that leveraging connected components yields identical estimation error while consistently reducing runtime. 

Based on these findings, we use IPM with connected components as the default solver in all experiments.

\begin{figure}[ht]
    \centering
    \includegraphics[width=\linewidth]{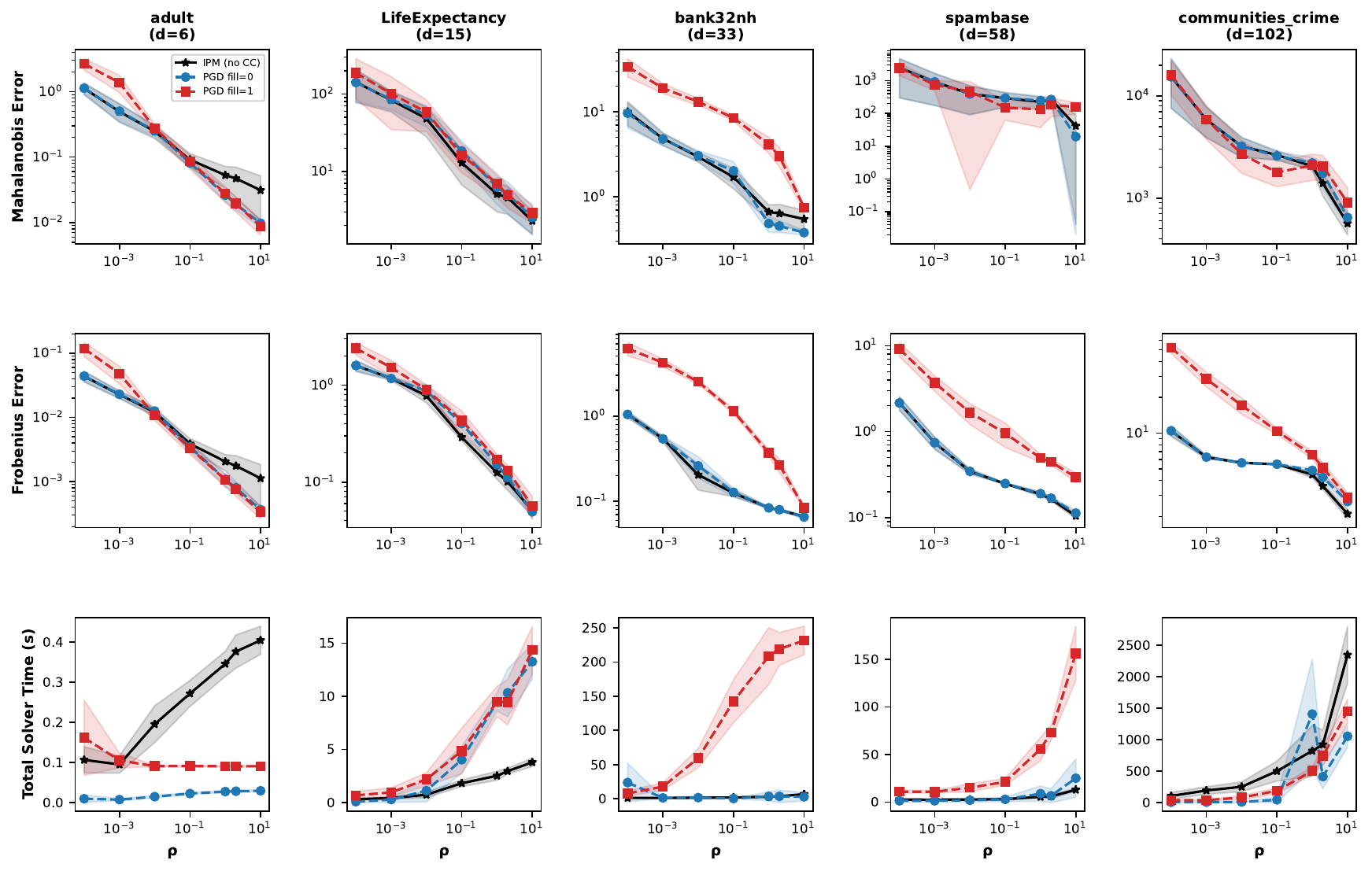}
    \caption{
        Solver: IPM vs.\ PGD across five datasets.
        Each column is a dataset ordered by dimension ($d=6$ to $d=102$).
        \textit{Top two rows}: Mahalanobis and Frobenius covariance estimation error (mean~$\pm$~std over 10 trials) as a function of the privacy budget~$\rho$.
        PGD's performance is dependent on initialization, while IPM's maximum-entropy approach converges reliably without need for initialization tuning.
        \textit{Bottom row}: total solver time per run (seconds).
        IPM is consistently fast and stable across all budgets and dimensions, and we use it as the default solver throughout.
    }
    \label{fig:pgd-ipm}
\end{figure}

\begin{figure}[h!]
    \centering
    \includegraphics[width=\linewidth]{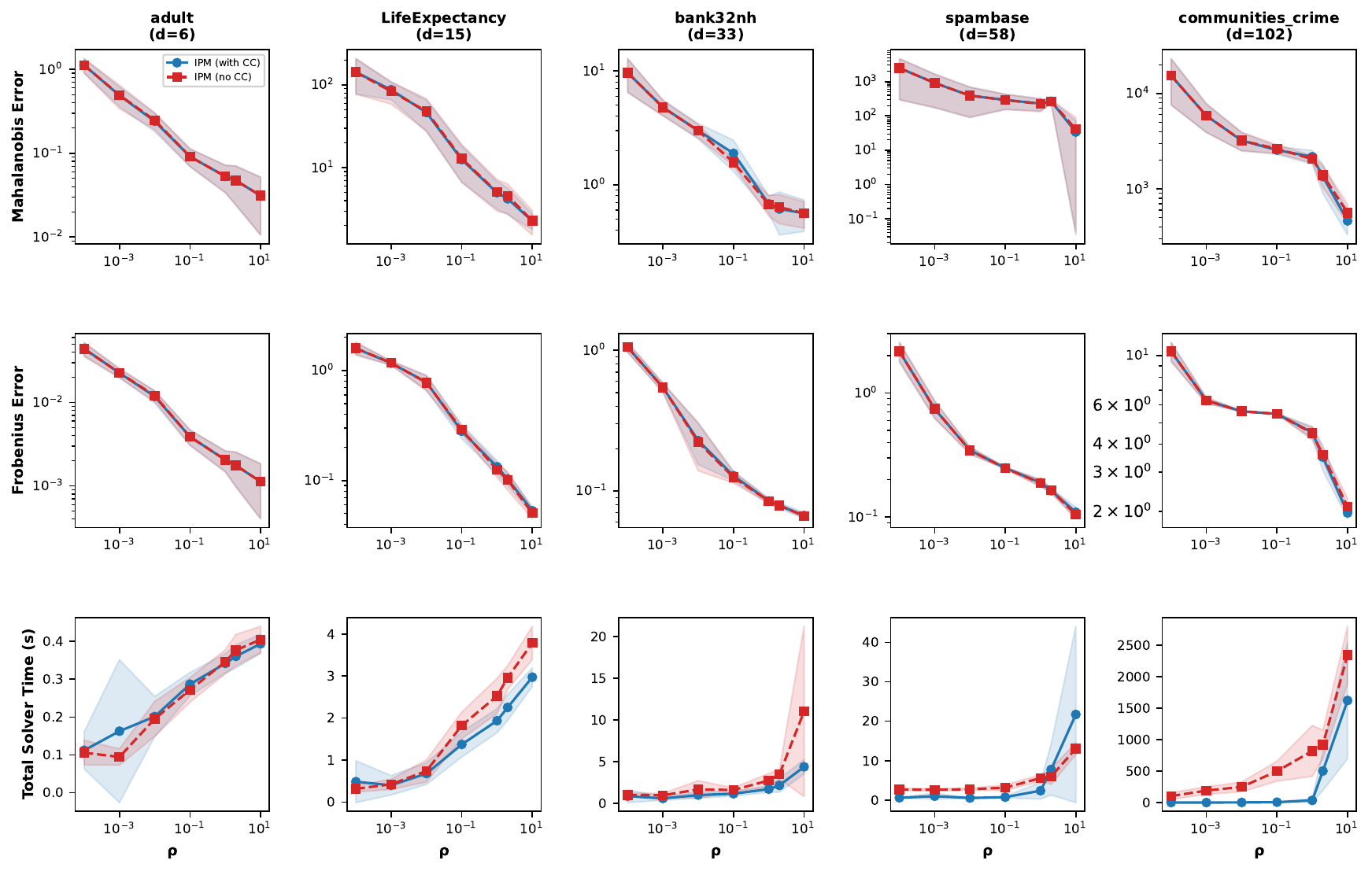}
    \caption{
        Solver: IPM without connected components (no CC) vs.\ IPM with connected components (CC) across five datasets.
        Each column is a dataset ordered by dimension ($d=6$ to $d=102$).
        \textit{Top two rows}: Mahalanobis and Frobenius covariance estimation error (mean~$\pm$~std over 10 trials) as a function of the privacy budget~$\rho$.
        The two approaches yield the same performance, with the connected components feature allowing runtime speedup by leveraging sparsity of the measurement graph.
    }
    \label{fig:ipm-cc}
\end{figure}

\newpage

\section{Regression}\label{app:regression}

As an application, we evaluate \PrivateGGM{} for linear regression. We compare against \textsc{AdaSSP}~\citep{wang2018revisiting}, a widely used baseline that applies \SSP{} to privately estimate the sufficient statistics $X^\top X$ and $X^\top y$, and allocates part of the privacy budget to estimating the minimum eigenvalue for regularization.
To ensure a fair comparison, we follow the same budget allocation strategy, reserving one third of the budget for regularization. However, instead of estimating sufficient statistics via \SSP{}, we derive them from the covariance matrix produced by \PrivateGGM{}.
A 80/20 train-test split is adopted for evaluation.
Figure~\ref{fig:regression} reports results on eight regression datasets. \PrivateGGM{} achieves competitive performance with \textsc{AdaSSP}, in most cases slightly outperforming it, while remaining a more general-purpose method, as it targets covariance estimation rather than a specific downstream task.

\begin{figure}[h!]
    \centering
    \includegraphics[width=0.95\linewidth]{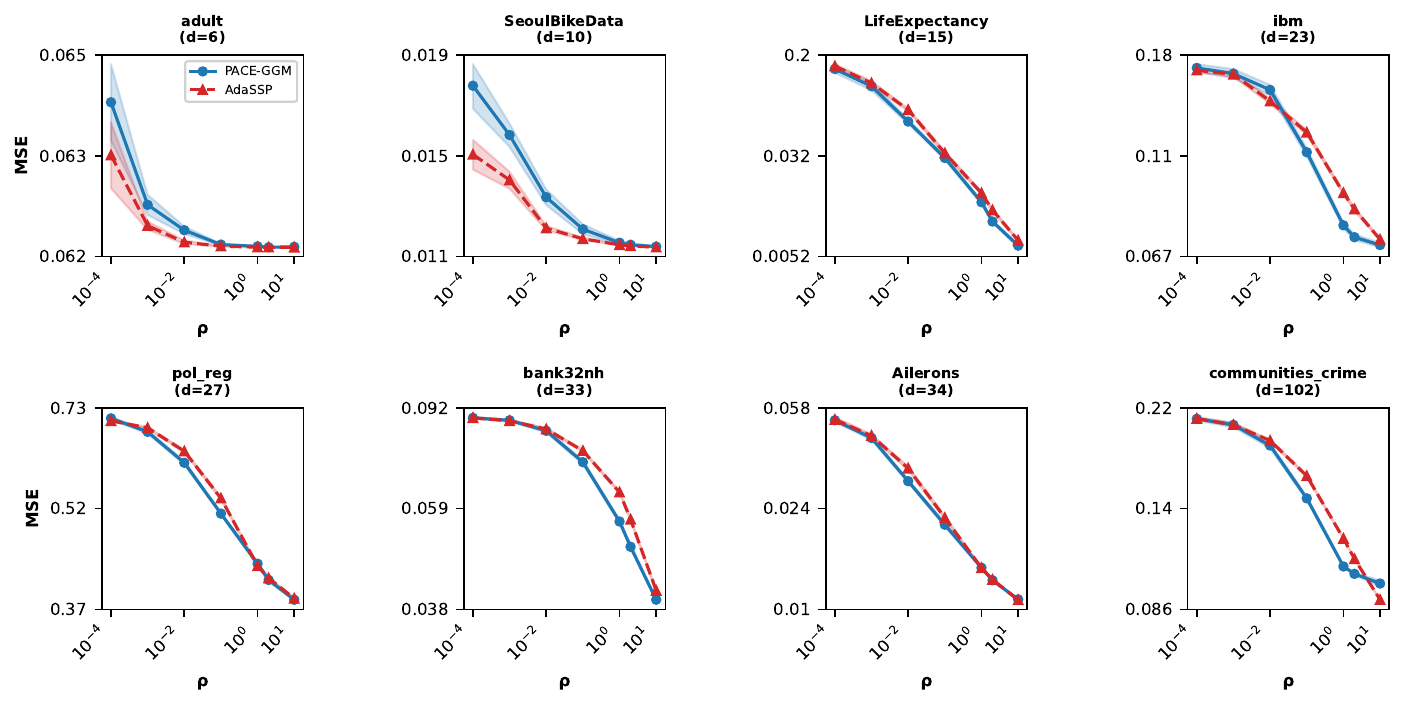}
    \caption{Mean square error (MSE) of \PrivateGGM{} and \textsc{AdaSSP}, across $\rho$ values ranging from $10^{-4}$ to $10$. Standard error bars are computed across 10 independent trials.}
    \label{fig:regression}
\end{figure}



\section{Runtime}
Table~\ref{tab:trial-times} reports mean total runtime per trial across all datasets and privacy budgets. 
For low-to-medium dimensional datasets ($d \leq 57$), runtimes are well under 3 seconds even at $\rho = 10$, confirming that \PrivateGGM{} is practical across the full budget range.
Runtime grows with both $d$ and $\rho$: larger $\rho$ means more budget to spend, so the algorithm runs more rounds and measures more entries before terminating.
This interaction between $d$ and $\rho$ is not monotone in $d$ alone ---
\texttt{madeline} ($d=260$) is paradoxically fast because its covariance is nearly diagonal (mean $|r| = 0.016$, effective rank $\text{rank}_{\mathrm{eff}} = 201$),
and the algorithm terminates after very few rounds.
\texttt{indian\_pines} ($d=187$), by contrast, is highly correlated ($\text{rank}_{\mathrm{eff}} = 2.6$), driving runtimes of thousands of seconds at high $\rho$ as many pairs remain informative across many rounds.
All experiments were run on a machine with an Apple M-series processor (10-core CPU, 10-core GPU, 16-core Neural Engine), 32 GB of unified memory, and 256 GB SSD storage.

\begin{table}[t]
\centering
\small
\caption{Mean total trial time (seconds) $\pm$ SE across 10 trials for selected $\rho$ values.}
\label{tab:trial-times}
\renewcommand{\arraystretch}{1.2}
\begin{tabular}{@{}lrcccc@{}}
\toprule
Dataset & $d$ & $\rho=10^{-4}$ & $\rho=10^{-2}$ & $\rho=1$ & $\rho=10$ \\
\midrule
\texttt{adult}                 & 6   & $0.05 \pm 0.01$ & $0.05 \pm 0.01$ & $0.13 \pm 0.01$ & $0.16 \pm 0.01$ \\
\texttt{SeoulBikeData}         & 10  & $0.08 \pm 0.01$ & $0.23 \pm 0.01$ & $0.41 \pm 0.02$ & $0.55 \pm 0.02$ \\
\texttt{LifeExpectancy}        & 15  & $0.12 \pm 0.02$ & $0.28 \pm 0.04$ & $0.95 \pm 0.06$ & $1.24 \pm 0.03$ \\
\texttt{Parkinsons}            & 16  & $0.12 \pm 0.01$ & $0.18 \pm 0.02$ & $0.68 \pm 0.05$ & $1.10 \pm 0.05$ \\
\texttt{thyroid\_ann}          & 21  & $0.14 \pm 0.02$ & $0.19 \pm 0.02$ & $0.35 \pm 0.04$ & $0.64 \pm 0.04$ \\
\texttt{ibm}                   & 23  & $0.18 \pm 0.02$ & $0.21 \pm 0.02$ & $0.64 \pm 0.07$ & $0.86 \pm 0.06$ \\
\texttt{pol\_reg}              & 26  & $0.16 \pm 0.02$ & $0.24 \pm 0.02$ & $1.90 \pm 0.13$ & $2.76 \pm 0.10$ \\
\texttt{BreastCancerWisconsin} & 32  & $0.19 \pm 0.02$ & $0.20 \pm 0.03$ & $0.90 \pm 0.17$ & $1.78 \pm 0.15$ \\
\texttt{bank32nh}              & 32  & $0.20 \pm 0.02$ & $0.38 \pm 0.03$ & $0.57 \pm 0.09$ & $1.52 \pm 0.17$ \\
\texttt{Ailerons}              & 33  & $0.18 \pm 0.02$ & $0.46 \pm 0.05$ & $1.61 \pm 0.12$ & $2.96 \pm 0.15$ \\
\texttt{spambase}              & 57  & $0.34 \pm 0.04$ & $0.37 \pm 0.04$ & $0.42 \pm 0.04$ & $0.65 \pm 0.09$ \\
\texttt{communities\_crime}    & 102 & $0.39 \pm 0.04$ & $0.63 \pm 0.06$ & $10.4 \pm 1.6$  & $47.8 \pm 2.8$  \\
\texttt{indian\_pines}         & 187 & $0.70 \pm 0.07$ & $8.01 \pm 2.22$ & $2714 \pm 369$  & $3117 \pm 697$             \\
\texttt{madeline}              & 260 & $0.83 \pm 0.11$ & $0.84 \pm 0.11$ & $1.28 \pm 0.10$ & $4.16 \pm 0.32$ \\
\bottomrule
\end{tabular}
\end{table}

\clearpage
\section{Hyperparameters $\alpha$ and $\beta$}\label{app:hyperparameters}

In this Section we study hyperparameters $\alpha \in (0,1)$, the fraction of $\rho$ reserved for diagonal initialization, and $\beta \in (0,1)$, the fraction of the per-round budget allocated to the exponential-mechanism selection step. We evaluate all nine combinations $(\alpha,\beta) \in \{0.1,0.3,0.5\}^2$ on five datasets (\texttt{Adult}, \texttt{LifeExpectancy}, \texttt{IBM}, \texttt{Bank32nh}, \texttt{Spambase}) across $\rho \in \{10^{-4},10^{-3},10^{-2}, 0.1, 1, 2, 10\}$ with five trials per setting. Results are in Tables~\ref{tab:grid_best_ab_frob}--\ref{tab:grid_best_ab_mah}.

\textbf{Effect of $\alpha$.} The optimal diagonal budget fraction exhibits interpretable dependence on $\rho$, consistent with the theoretical role of each phase. At low budgets ($\rho \leq 10^{-3}$), $\alpha=0.5$ wins in 8/10 cells: when measurements are highly noisy, a well-conditioned diagonal initialization prevents wasted budget on near-zero off-diagonal entries. At higher budgets ($\rho \geq 0.1$), $\alpha=0.3$ dominates for Mahalanobis error, reflecting that off-diagonal structure becomes resolvable and warrants more investment. This budget-regime dependence is exactly what one would expect from the algorithm's design. We set $\alpha=0.3$ as the default, as Mahalanobis error better captures accuracy for downstream inference.

\textbf{Effect of $\beta$.} The results confirm the core design intuition: accurate entry selection is critical. $\beta=0.5$ achieves the lowest Frobenius error in 25/35 dataset$\times\rho$ cells and 14/20 in the high-budget regime $\rho \in \{0.1,1,2,10\}$; $(\alpha{=}0.3,\,\beta{=}0.5)$ is the single most frequent Mahalanobis winner with 13/35 cells. Investing half the per-round budget in selection consistently pays off. We fix $\beta=0.5$ for all main experiments.

\textbf{Robustness to hyperparameter choice.} Crucially, performance is not sensitive to the exact choice of $(\alpha,\beta)$: all combinations in $(\alpha,\beta) \in \{0.1,0.3,0.5\}^2$ remain competitive across datasets and privacy budgets. Figure~\ref{fig:plot-hyperparameters} shows \PrivateGGM{}'s performance under all combinations of hyperparameters $\alpha$ and $\beta$, highlighting only a modest change in performance across datasets and $\rho$. This robustness is practically significant --- it means the algorithm does not require careful tuning or dedicated hyperparameter search privacy budget. The default $(\alpha{=}0.3,\,\beta{=}0.5)$ is a reliable choice across diverse settings.

\begin{table}[ht]
  \centering\scriptsize\setlength{\tabcolsep}{5pt}
  \resizebox{\linewidth}{!}{%
  \begin{tabular}{lccccccc}
  \toprule
  \textbf{Dataset} & $\rho{=}$$10^{-4}$ & $\rho{=}$$10^{-3}$ & $\rho{=}$$10^{-2}$ & $\rho{=}$$0.1$ & $\rho{=}$$1$ & $\rho{=}$$2$ & $\rho{=}$$10$ \\
  \midrule
  Adult & \textbf{$(0.5,\,0.1)$} & \textbf{$(0.3,\,0.5)$} & \textbf{$(0.1,\,0.5)$} & \textbf{$(0.1,\,0.3)$} & \textbf{$(0.3,\,0.1)$} & \textbf{$(0.1,\,0.1)$} & \textbf{$(0.5,\,0.3)$} \\
  LifeExp. & \textbf{$(0.5,\,0.1)$} & \textbf{$(0.3,\,0.3)$} & \textbf{$(0.1,\,0.5)$} & \textbf{$(0.3,\,0.5)$} & \textbf{$(0.3,\,0.5)$} & \textbf{$(0.1,\,0.5)$} & \textbf{$(0.1,\,0.5)$} \\
  IBM & \textbf{$(0.5,\,0.5)$} & \textbf{$(0.5,\,0.1)$} & \textbf{$(0.5,\,0.5)$} & \textbf{$(0.1,\,0.5)$} & \textbf{$(0.3,\,0.5)$} & \textbf{$(0.1,\,0.5)$} & \textbf{$(0.1,\,0.5)$} \\
  Bank32nh & \textbf{$(0.5,\,0.5)$} & \textbf{$(0.5,\,0.5)$} & \textbf{$(0.1,\,0.5)$} & \textbf{$(0.5,\,0.3)$} & \textbf{$(0.1,\,0.5)$} & \textbf{$(0.3,\,0.5)$} & \textbf{$(0.1,\,0.5)$} \\
  Spambase & \textbf{$(0.5,\,0.5)$} & \textbf{$(0.5,\,0.5)$} & \textbf{$(0.5,\,0.5)$} & \textbf{$(0.5,\,0.1)$} & \textbf{$(0.1,\,0.5)$} & \textbf{$(0.3,\,0.5)$} & \textbf{$(0.1,\,0.5)$} \\
  \bottomrule
  \end{tabular}}
  \caption{Frobenius error. Best $(\alpha,\beta)$ pair per dataset and $\rho$ (5 trials each). Top winning combinations by count: $(0.1, 0.5)$: 12; $(0.5, 0.5)$: 7; $(0.3, 0.5)$: 6.}
  \label{tab:grid_best_ab_frob}
\end{table}

\begin{table}[ht]
  \centering\scriptsize\setlength{\tabcolsep}{5pt}
  \resizebox{\linewidth}{!}{%
  \begin{tabular}{lccccccc}
  \toprule
  \textbf{Dataset} & $\rho{=}$$10^{-4}$ & $\rho{=}$$10^{-3}$ & $\rho{=}$$10^{-2}$ & $\rho{=}$$0.1$ & $\rho{=}$$1$ & $\rho{=}$$2$ & $\rho{=}$$10$ \\
  \midrule
  Adult & \textbf{$(0.5,\,0.3)$} & \textbf{$(0.5,\,0.5)$} & \textbf{$(0.3,\,0.5)$} & \textbf{$(0.1,\,0.3)$} & \textbf{$(0.5,\,0.5)$} & \textbf{$(0.5,\,0.1)$} & \textbf{$(0.5,\,0.3)$} \\
  LifeExp. & \textbf{$(0.5,\,0.5)$} & \textbf{$(0.5,\,0.1)$} & \textbf{$(0.1,\,0.5)$} & \textbf{$(0.1,\,0.5)$} & \textbf{$(0.3,\,0.5)$} & \textbf{$(0.1,\,0.1)$} & \textbf{$(0.3,\,0.5)$} \\
  IBM & \textbf{$(0.3,\,0.5)$} & \textbf{$(0.5,\,0.1)$} & \textbf{$(0.3,\,0.5)$} & \textbf{$(0.3,\,0.1)$} & \textbf{$(0.3,\,0.5)$} & \textbf{$(0.1,\,0.5)$} & \textbf{$(0.3,\,0.5)$} \\
  Bank32nh & \textbf{$(0.5,\,0.3)$} & \textbf{$(0.5,\,0.3)$} & \textbf{$(0.3,\,0.5)$} & \textbf{$(0.1,\,0.1)$} & \textbf{$(0.1,\,0.3)$} & \textbf{$(0.1,\,0.5)$} & \textbf{$(0.3,\,0.3)$} \\
  Spambase & \textbf{$(0.3,\,0.5)$} & \textbf{$(0.3,\,0.5)$} & \textbf{$(0.3,\,0.5)$} & \textbf{$(0.5,\,0.1)$} & \textbf{$(0.3,\,0.5)$} & \textbf{$(0.1,\,0.3)$} & \textbf{$(0.3,\,0.5)$} \\
  \bottomrule
  \end{tabular}}
  \caption{Mahalanobis error. Best $(\alpha,\beta)$ pair per dataset and $\rho$ (5 trials each). Top winning combinations by count: $(0.3, 0.5)$: 13; $(0.5, 0.3)$: 4; $(0.5, 0.1)$: 4.}
  \label{tab:grid_best_ab_mah}
\end{table}

\begin{figure}[ht]
    \centering
    \includegraphics[width=0.95\linewidth]{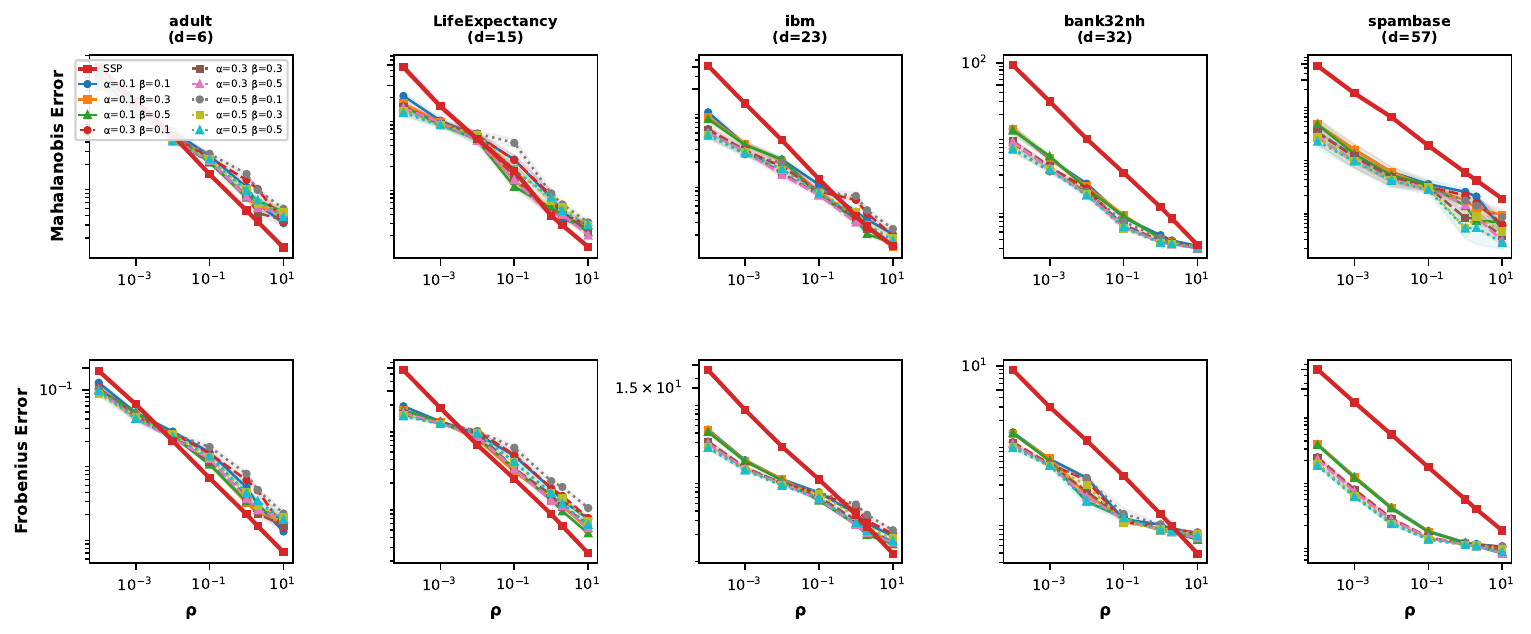}
    \caption{\PrivateGGM{} error under all nine combinations of $(\alpha,\beta) \in \{0.1,0.3,0.5\}^2$. \SSP{} is included as a baseline for reference. The performance is robust to the choice of $\alpha$ and $\beta$, with minimal performance changes.}
    \label{fig:plot-hyperparameters}
\end{figure}


\end{document}